\newtheorem{theorem}{Theorem}
\newtheorem{proposition}[theorem]{Proposition}
\newtheorem{lemma}[theorem]{Lemma}
\newtheorem{example}[theorem]{Example}
\theoremstyle{definition}
\newtheorem{definition}[theorem]{Definition}
\theoremstyle{remark}
\title{Test-time Diverse Reasoning by Riemannian Activation Steering}
\author {
    Ly Tran Ho Khanh\equalcontrib\textsuperscript{\rm 1},
    Dongxuan Zhu\equalcontrib\textsuperscript{\rm 1},
    Man-Chung Yue\textsuperscript{\rm 2},
    Viet Anh Nguyen\textsuperscript{\rm 1},
}
\newcommand{\dd}{\mathrm{d}}
\newcommand{\R}{\mathbb{R}}
\newcommand{\mc}{\mathcal}
\newcommand{\opt}{^\star}
\newcommand{\grad}{\mathrm{grad}}
\newcommand{\Proj}{\mathrm{Proj}}
\newcommand{\Exp}{\mathrm{Exp}}
\newcommand{\Hess}{\mathrm{Hess}}
\newcommand{\Let}{\triangleq}
\begin{document}

\maketitle

\begin{abstract}
Best-of-$N$ reasoning improves the accuracy of language models in solving complex tasks by sampling multiple candidate solutions and then selecting the best one based on some criteria. A critical bottleneck for this strategy is the output diversity limit, which occurs when the model generates similar outputs despite stochastic sampling, and hence recites the same error. To address this lack of variance in reasoning paths, we propose a novel unsupervised activation steering strategy that simultaneously optimizes the steering vectors for multiple reasoning trajectories at test time. At any synchronization anchor along the batch generation process, we find the steering vectors that maximize the total volume spanned by all possible intervened activation subsets. We demonstrate that these steering vectors can be determined by solving a Riemannian optimization problem over the product of spheres with a log-determinant objective function. We then use a Riemannian block-coordinate descent algorithm with a well-tuned learning rate to obtain a stationary point of the problem, and we apply these steering vectors until the generation process reaches the subsequent synchronization anchor. Empirical evaluations on popular mathematical benchmarks demonstrate that our test-time Riemannian activation steering strategy outperforms vanilla sampling techniques in terms of generative diversity and solution accuracy.
\end{abstract}

\begin{links}
    \link{Code}{https://github.com/lythk88/SPREAD}
\end{links}

\section{Introduction}

Language models (LMs) have revolutionized tasks from code generation~\cite{ref:chen2021evaluating}, symbolic reasoning~\cite{ref:wei2023chain}, to mathematical problem solving~\cite{ref:lewkowycz2022solving}. In these tasks, the quality of the final solution can improve significantly by exploring multiple plausible reasoning paths and then presenting the best solution aggregated from the information from these paths. This strategy aligns with our problem-solving intuitions, where each path of reasoning explores different techniques, generalizability, and scientific values. A simple and effective method to exploit this explore-then-aggregate idea is Best-of-$N$ sampling~\cite{ref:lightman2023let, ref:ni2023lever}, where the model generates $N$ candidates and uses a reward model to select the most plausible answer. The final accuracy of this Best-of-$N$ strategy is constrained by the diversity of the generated candidates. 
A straightforward method to encourage exploration is to use a stochastic sampling decoder when generating the next token, leading to a zoo of emerging methods~\cite{ref:fan2018hierarchical,ref:holtzman2020curious,ref:meister2025locally}. These methods construct a token-level search space with varying access to the internals of an LM, such as logits, next-token distributions, or probability scores.

Yet, stochastic decoding methods frequently suffer from \textit{diversity collapse}, where the outputs of LMs may converge to nearly identical reasoning paths~\cite{ref:yun2025the, ref:dang2025weight}. This phenomenon has sparked more aggressive search strategies, such as contrastive search \cite{ref:su2023contrastive}, balancing the model confidence with the degeneration penalty to avoid repetitiveness. Alternatively, \citet{ref:vijayakumar2018diverse} maintains multiple diverse hypotheses during beam search by diversity-promoting objectives. Additionally, self-speculative decoding \cite{ref:Zhang2024Draft} and speculative sampling \cite{ref:leviathan2023fast} inherently promote diversity through their multi-step prediction mechanisms. Most of these search strategies are computationally intensive, as they require joint consideration of reasoning trajectories distribution~\cite{ref:nagarajan2025roll}.

Another challenge to promoting reasoning diversity is measuring it. Popular metrics like lexical or semantic diversity may not capture the reasoning diversity well. Lexical diversity measures the number of different meanings or perspectives conveyed among sequences, but is sensitive to text length, rephrasing, or synonyms~\cite{ref:bestgen2024measuring}. Similarly, semantic diversity quantifies the different meanings or perspectives, but it is sensitive to adding or removing details from the text~\cite{ref:han2022measuring}. Computationally, both often invoke extra neural architecture for evaluation, thus adding load to the inference process.

In this paper, we look at the reasoning diversity through another proxy: the diversity of the hidden activations of the generated sequences. Hidden activations are the internal representations computed by a language model at each layer and token position as it processes input. They encode intermediate computations and abstract concepts, acting as the latent ``thinking space" where reasoning, memory, and structure are implicitly formed. While there may be strong correlations between the activations and the reasoning paths, there is unfortunately no one-to-one equivalence between them. Thus, admittedly, promoting diversity of the hidden activations does not necessarily lead to diversity in the reasoning paths. However, recent progress suggests that encouraging diversity among neuron activations within the same layer increases the capacity of the model to learn a broader range of features~\cite{ref:laakom2023wldreg}. In general, increasing the diversity of hidden activations can reduce estimation error and improve generalization.
Moreover, models that facilitate diverse internal activations may be better equipped to represent and synthesize multiple reasoning strategies, as the richer internal space allows for more varied ``thought paths"~\cite{ref:naik2024diversity}. Recent results in interpretability indicate that different features or activation clusters can sometimes correspond to different ``reasoning circuits" or strategies, especially in LMs~\cite{ref:jack2025on}. These observations suggest that we should design fast and parameter-efficient mechanisms to promote the diversity of hidden activations during generation, hoping to induce reasoning diversity and improve the accuracy for Best-of-$N$ sampling.

\noindent\textbf{Contributions.} We summarize our contributions as follows:
\begin{itemize}
    \item We propose the \textbf{SP}herical intervention for \textbf{REA}soning \textbf{D}iversity (SPREAD), an unsupervised activation steering method that improves the diversity among reasoning trajectories. At a synchronization anchor, SPREAD extracts the hidden activations from all sequences, then computes the steering vectors that maximize the total volume spanned by all possible subsets of the intervened activations. SPREAD then adds these steering vectors to the respective activations of all subsequent tokens until the next synchronization anchor. 
    \item We show that determining the optimal steering vectors can be reformulated as a manifold optimization problem defined over the product of spheres, where the log-determinant objective function captures the geometric diversity of the intervened activations. We propose using a Riemannian block coordinate descent algorithm, which exploits the product structure of the manifold constraints. We also study the theoretical properties of the optimization problem and prove the convergence guarantee of the algorithm for appropriate step sizes.
\end{itemize}

Using the steering methods, SPREAD uses readily available hidden activations from the generation process and does not require any additional neural architectures to measure quality or reasoning diversity. Moreover, SPREAD could rely on only one hyperparameter that prescribes the relative radii of the intervention vectors, and it relieves the burden of parameter tuning at inference time.

Our paper unfolds as follows: Section~\ref{sec:review} reviews the related works on generative diversity and activation steering, Section~\ref{sec:SPREAD} presents the mathematical formulation of the SPREAD framework, Section~\ref{sec:manifold opt} develops the manifold optimization algorithm for computing the optimal steering vectors, and Section~\ref{sec:experiment} empirically illustrates the performance of SPREAD on mathematical reasoning tasks. 

\textbf{Notations.} The space of $p$-dimensional vectors is denoted $\R^{p}$. For any $x\in \R^{p}$, $\| x \|_2$ is its Euclidean norm. For a matrix $A \in \R^{p \times N}$, we use $\|A\|_{F}$ for the Frobenius norm. We use $\nabla\ell(V)$ and $\nabla^2 \ell(V)$ for the gradient of function $\ell$ and Hessian matrix of function $\ell$ with respect to $V$ in the Euclidean sense; while $\grad~\ell$ and $\Hess~\ell$ are the Riemannian counterparts. We use $\nabla_{i}\ell(V)$, $\nabla^{2}_{i}\ell(V)$, $\grad_{i}~\ell(V)$ and $\Hess_{i}~\ell(V)$ for the corresponding operator with respect to the $i$-th block $v_{i}$ while fixing all other blocks. All proofs are relegated to the appendix.

\section{Literature Review} \label{sec:review}
\textbf{Diversity in generation.} 
Classical approaches, including temperature sampling \cite{ref:ackley1985learning}, top-$k$ sampling \cite{ref:fan2018hierarchical}, nucleus sampling \cite{ref:holtzman2020curious}, and typical decoding \cite{ref:meister2025locally}, promote output diversity by introducing stochasticity into the generation process. 
Prompt‐centric techniques have also been shown to enrich the diversity of reasoning. \citet{ref:li2023making} focuses on prompt diversity by generating diverse prompts to explore different reasoning paths, while filtering out incorrect answers by a weighted voting scheme. \citet{ref:naik2024diversity} proposes a self‐reflective prompting that leverages the LLM as a guide to design a diverse set of approaches for complex reasoning tasks. 
\citet{ref:Wang2025adaptive} uses multiple adaptive steering vectors for different hallucination types, though maintaining multiple vectors is computationally expensive. \citet{ref:chung2025revisiting} achieves diversity through parameter-efficient prefix tuning, but effectiveness is sensitive to training data quality.

\noindent\textbf{Activation Steering} is a lightweight and interpretable method for controlling LMs. This approach injects direction vectors into the residual stream of transformer layers to steer generation toward desired attributes (e.g., truthfulness, sentiment, toxicity) without modifying model weights.  \textit{Contrastive steering methods} derive directions by comparing activations between positive and negative examples of desired behaviors. \citet{ref:turner2023steering} computes steering vectors by averaging residual stream differences between factual and hallucinatory responses, while \citet{ref:stolfo2025improving} contrasts activations with and without specific instructions. \textit{Probe-based steering methods} instead learn to identify relevant concepts through trained classifiers, then extract steering directions from the learned representations~\cite{ref:Li2023advances, ref:zhang2025controlling}.
Despite demonstrating effectiveness across domains like toxicity reduction~\cite{ref:zhang2025controlling}, and truthfulness enhancement~\cite{ref:turner2023steering}, activation steering restricts its broader applicability. Most existing approaches rely on single, fixed direction vectors that constrain model outputs to narrow behavioral modes. 
The challenge becomes even more pronounced in mathematical reasoning, where defining clear positive and negative exemplars for contrastive learning is inherently difficult because mathematical correctness involves complex, context-dependent logical structures that resist simple binary classification. These domain-specific challenges explain why prior activation steering research has largely avoided mathematical reasoning applications.

\begin{figure*}[!ht]
    \centering
    \includegraphics[width=1.05\textwidth]{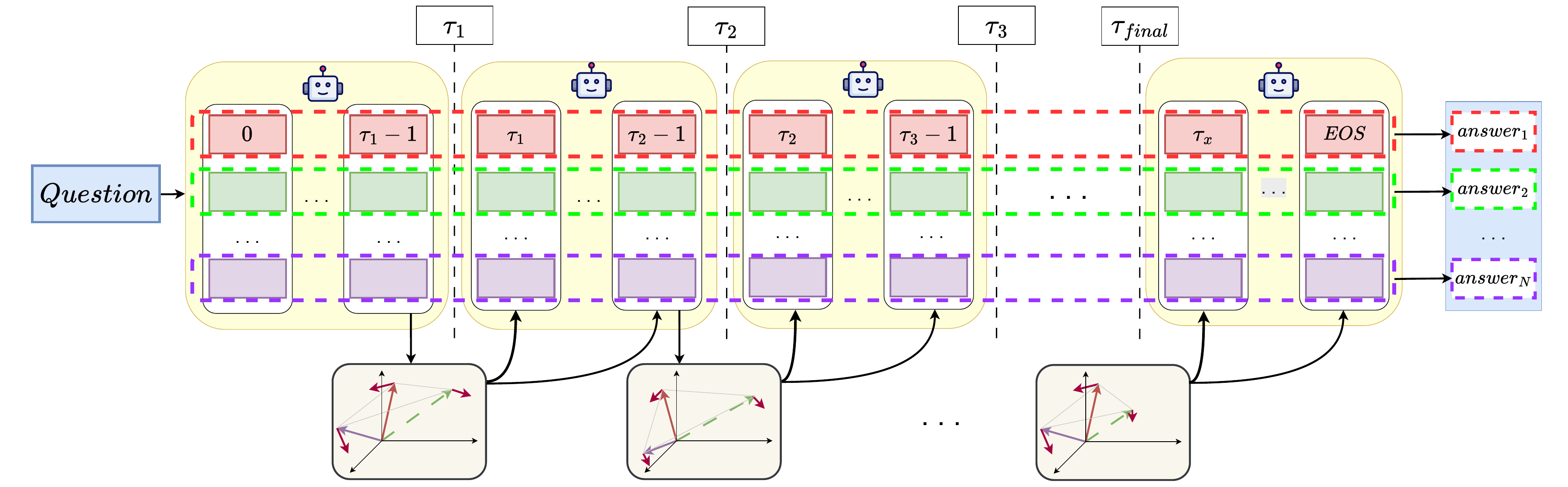}
    \caption{Overview of SPREAD for generating $N$ diverse reasoning answers simultaneously. At each decoding step $\tau_t$, we extract the hidden vectors corresponding to the last token in each path.  These hidden vectors serve as inputs to Algorithm~\ref{alg:algorithm1}, where they are projected into a shared activation space to compute $N$ steering vectors. This process is repeated until an end-of-sequence $EOS$ token is generated.}
    \label{fig:framework}
    \vspace{-3mm}
\end{figure*}

\section{Activation Steering for Diverse Generation under the Optimization Lens} \label{sec:SPREAD}

We aim to elicit diverse reasoning paths from an LM at test time without any fine-tuning by sampling multiple sequences per input prompt. Figure \ref{fig:framework} shows the core idea of the SPREAD framework, which intervenes directly in the model's activation space during the autoregressive generation process.
Specifically, we simultaneously generate $N$ output sequences and extract the final hidden state vectors after $\tau$ tokens, $H = [h_1, \ldots, h_N] \in \R^{p \times N}$, for all sequences. 
We then compute an additive steering vector $v_i$ for each hidden state $h_i$, yielding a new set of hidden states $H_{new} = H + V$, where $V=[v_1, \ldots, v_N]$. The core of our approach is to maximize a geometric measure of diversity for $H_{new}$ (\ref{fig:placeholder}), or equivalently with respect to the steering vectors $V$. A natural measure for the dispersion of a set of vectors is the volume of the parallelepiped they span.
\begin{definition}[Parallelepiped] Given $n$ vectors $h_1, \ldots, h_n \in \R^p$, the parallelepiped is their convex hull:
    \[
        \mathcal P( \{ h_1, \ldots, h_n\} ) \Let \left\{ h \in \R^p : h = \sum_{i = 1}^n \lambda_i h_i,~\lambda \in [0, 1]^n \right\}.
    \]    
\end{definition}
To encourage robust diversity, we require that not only the entire set of $N$ steered vectors be diverse, but that \textit{any} subset of these vectors also be diverse. This prevents degenerate solutions where, for instance, $N-1$ vectors are clustered together and only one is pushed far away. Let $\mathbb I \in 2^{[N]}$ be any index subset of the $N$ sequences. We aim to maximize the sum of the squared volumes of the parallelepiped associated with all possible subsets:
    \begin{equation} \label{eq:volume}
        \max_{\forall i: \| v_i \|_2^2 \le \alpha_i}~\sum_{\mathbb I \in 2^{[N]}} \mathrm{Volume}( \mathcal P(\{ h_i + v_i\}_{i \in \mathbb I}) )^2,
\end{equation}
where $\alpha_i$ is a magnitude for the intervention vector $v_i$ on the $i$-th path. The squared norm constraints effectively keep the modified hidden state $h_{i}+v_{i}$ within a ``trust region" of moderate radius $\sqrt{\alpha_i}$ around the original state $h_{i}$. If $\alpha_i$ is too big, the vector $v_i$ could erase meaningful information stored in $h_i$, leading to generation collapse. The next proposition gives an explicit form of the objective function in Problem~\eqref {eq:volume} as a log-determinant function.

\begin{proposition}[Objective function equivalence] \label{prop:obj}
    Problem~\eqref{eq:volume} is equivalent to the following log-determinant optimization problem
    \begin{equation} \label{eq:minlogdet}
        \min_{\substack{\forall i: \| v_i \|_2^2 \le \alpha_i \\ V = [v_1, \ldots, v_N]}}~\ell(V) \Let -\log\det [ I + (H+V)^\top (H + V) ].
    \end{equation}
\end{proposition}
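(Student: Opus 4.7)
The plan is to rewrite each summand in \eqref{eq:volume} as a principal minor of a single Gram matrix, apply the classical identity that sums such principal minors to a single determinant, and finally observe that the logarithm is a monotone transformation so the maximization and minimization problems are equivalent.

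First, I would use the standard identification between the squared volume of a parallelepiped and a Gram determinant. Let $M \Let H + V \in \mathbb{R}^{p \times N}$ and, for any $\mathbb{I} \subseteq [N]$, let $M_{\mathbb{I}}$ denote the submatrix of $M$ whose columns are indexed by $\mathbb{I}$. Then
\[
    \mathrm{Volume}\bigl(\mathcal{P}(\{h_i + v_i\}_{i\in\mathbb{I}})\bigr)^2 = \det\bigl( M_{\mathbb{I}}^\top M_{\mathbb{I}} \bigr),
\]
which is the $\mathbb{I} \times \mathbb{I}$ principal minor of the Gram matrix $L \Let M^\top M = (H+V)^\top(H+V)$. (For $\mathbb{I} = \emptyset$ I adopt the usual convention that the determinant of the empty minor equals $1$, consistent with contributing the term for the trivial subset.)

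Second, I would apply the well-known determinantal identity (the normalizing-constant formula for L-ensembles / a direct consequence of the Cauchy--Binet formula),
\[
    \sum_{\mathbb{I} \subseteq [N]} \det\bigl(L_{\mathbb{I}}\bigr) = \det( I + L ),
\]
valid for any positive semidefinite matrix $L$. Specializing to $L = (H+V)^\top (H+V)$ turns the sum in \eqref{eq:volume} into the single expression $\det\bigl( I + (H+V)^\top (H+V) \bigr)$.

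Finally, since $\log$ is strictly increasing on $(0,\infty)$ and the matrix $I + (H+V)^\top(H+V)$ is strictly positive definite (so its determinant is strictly positive for every feasible $V$), maximizing the sum of squared volumes is equivalent to maximizing $\log\det\bigl(I + (H+V)^\top(H+V)\bigr)$, hence to minimizing $\ell(V) = -\log\det\bigl(I + (H+V)^\top(H+V)\bigr)$ over the same feasible set $\{V : \|v_i\|_2^2 \le \alpha_i,\ \forall i\}$, which is exactly \eqref{eq:minlogdet}. The main technical step is the determinantal identity in the second stage; everything else is bookkeeping. I expect no serious obstacle, since this identity is classical and the Gram representation of parallelepiped volume is standard.
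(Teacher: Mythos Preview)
Your proposal is correct and follows essentially the same route as the paper: express each squared volume as a Gram principal minor, apply the identity $\sum_{\mathbb I\subseteq[N]}\det(L_{\mathbb I})=\det(I+L)$ (the paper cites \citet[Theorem~2.1]{ref:kulesza2012determinantal}, which is the same L-ensemble normalizing constant you invoke), and then use monotonicity of $\log$. Your added remarks on the empty-subset convention and the strict positive definiteness of $I+(H+V)^\top(H+V)$ are fine extra bookkeeping that the paper leaves implicit.
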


Problem~\eqref{eq:minlogdet} has a convex feasible set, but its objective function $\ell$ is non-convex, as illustrated in the next example.
\begin{example}[Non-convexity of $\ell$]
        Take \[
        H=\left[\begin{matrix}
            1&0\\0&1
        \end{matrix}\right], V_1 = \left[\begin{matrix}
            0&0\\0&0
        \end{matrix}\right], V_2 = \left[\begin{matrix}
            -2&0\\0&-2
        \end{matrix}\right],
        \] and $V_3 = \frac{1}{2}(V_{1} + V_{2})$. Then  $\ell(V_{1})=\ell(V_{2})=-\log 4$ and $\ell(V_{3})=0$. We find $2\ell(V_{3}) - ( \ell(V_{1})+\ell(V_{2})) =2\log4>0$, which implies that $\ell$ is not convex.
    \end{example}

Problem~\eqref{eq:minlogdet} turns out to be a non-convex problem, and it is, in general, NP-hard to find its global optimum~\cite{ref:jin2021nonconvex}. However, we can show a qualitative result asserting that the optimal steering vectors of Problem~\eqref{eq:volume} will make the norm constraint $\|v_{i} \|^{2}_{2} \leq \alpha_{i}$ binding, and we obtain another equivalent problem with equality constraints.

\begin{proposition}[Constraint equivalence]\label{prop:equality_refor}
    Problem~\eqref{eq:volume} is further equivalent to the following log-determinant optimization problem with equality constraints 
    \begin{equation}\label{eq:minlogdet_eq}
     \min~\left\{ \ell(V)~:~\| v_i \|_2^2 = \alpha_i, V = [v_1, \ldots, v_N] \right\}.
    \end{equation}
\end{proposition}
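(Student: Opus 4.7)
By Proposition~\ref{prop:obj}, Problem~\eqref{eq:volume} is already equivalent to Problem~\eqref{eq:minlogdet}, so the plan is to establish equivalence between Problems~\eqref{eq:minlogdet} and~\eqref{eq:minlogdet_eq}. Since the feasible set of the equality-constrained problem is contained in that of the inequality-constrained one, its optimal value is at least as large, and the task reduces to showing the reverse inequality. I would do so by proving that every global minimizer $V^\star$ of Problem~\eqref{eq:minlogdet} automatically satisfies $\|v_i^\star\|_2^2 = \alpha_i$ for all $i$, which makes $V^\star$ feasible for Problem~\eqref{eq:minlogdet_eq}. Existence of such a minimizer is immediate from the continuity of $\ell$ on the compact product of closed balls.

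The central step is a coordinate-wise analysis. Fixing an index $i$ and freezing $v_j = v_j^\star$ for $j \ne i$, I would first apply Sylvester's determinant identity to rewrite $\det(I+(H+V)^\top(H+V))$ as $\det(I_p + WW^\top)$ with $W = H+V$, and then invoke the matrix determinant lemma to isolate the dependence on $v_i$:
\begin{equation*}
\ell(V) = -\log\det(A_i) - \log\bigl(1 + (h_i+v_i)^\top A_i^{-1}(h_i+v_i)\bigr),
\end{equation*}
where $A_i \Let I_p + \sum_{j\ne i}(h_j+v_j^\star)(h_j+v_j^\star)^\top$ is positive definite and independent of $v_i$. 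The coordinate subproblem in $v_i$ therefore amounts to maximizing the strictly convex quadratic $g(v_i) \Let (h_i+v_i)^\top A_i^{-1}(h_i+v_i)$ over the closed ball $\|v_i\|_2^2 \le \alpha_i$.

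A strictly convex function on a compact convex set attains its maximum only on the extreme points of that set, which here is the boundary sphere. Concretely, suppose $\|v_i^\star\|_2^2 < \alpha_i$. If $v_i^\star \ne -h_i$, then along the ray $t \mapsto -h_i + t(h_i + v_i^\star)$ emanating from the unique minimizer $-h_i$ of $g$ and passing through $v_i^\star$, the function $g$ grows quadratically in $t$; for some $t>1$ the ray remains in the interior of the ball but yields a strictly larger $g$ value, hence a strictly smaller $\ell$, contradicting optimality of $V^\star$. If instead $v_i^\star = -h_i$, then $g(v_i^\star) = 0$, while any other feasible $v_i$ gives $g(v_i) > 0$, yielding the same contradiction.

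The step requiring most care is precisely this degenerate case $v_i^\star = -h_i$, where the Euclidean gradient of $\ell$ with respect to $v_i$ vanishes and a first-order descent argument fails; strict convexity of $g$ resolves it cleanly via the second-order perturbation above, but one has to rule out this stationary point as a local optimum explicitly. Once boundary activation is established for every $i$, every optimizer of Problem~\eqref{eq:minlogdet} lies in the feasible set of Problem~\eqref{eq:minlogdet_eq}, so the two problems share optimal values and optimizers and the equivalence chain from Problem~\eqref{eq:volume} is complete.
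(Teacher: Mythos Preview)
Your proof is correct and shares with the paper the decisive identity: using Sylvester's determinant identity together with the matrix determinant lemma to write
\[
\ell(V) = -\log\det(A_i) - \log\bigl(1 + (h_i+v_i)^\top A_i^{-1}(h_i+v_i)\bigr),
\]
thereby reducing the coordinate subproblem to maximizing a positive-definite quadratic form over a ball. Where the two arguments differ is in how they reach the degenerate case $v_i^\star = -h_i$. The paper works through the KKT system: it checks LICQ, assumes some multiplier $\lambda_i^\star = 0$, and uses the stationarity condition $\nabla_i \mathcal{L}(V^\star,\lambda^\star)=0$ to \emph{derive} that $h_i + v_i^\star = 0$; only then does it construct a strictly improving $\tilde V$ exactly as in your Case~2. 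Your route is more elementary: you bypass multipliers entirely and argue directly from the strict convexity of $g$ that an interior maximizer is impossible, handling both the non-degenerate (ray) and degenerate cases by hand. The KKT detour buys nothing extra here---it simply identifies the unique interior stationary point before ruling it out---so your direct argument is a modest simplification that reaches the same contradiction with less machinery.
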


The equality constraints $ \| v_i \|_2^2 = \alpha_i$ are no longer convex. However, the advantage of the reformulation~\eqref{eq:minlogdet_eq} is that we can cast Problem~\eqref{eq:minlogdet_eq} as an optimization problem over a Riemannian manifold. In the next section, we will describe the procedure for solving Problem~\eqref{eq:minlogdet_eq} with manifold optimization methods.

    \noindent\textbf{Hyperparameters.} Problem~\eqref{eq:volume} and its equivalent form~\eqref{eq:minlogdet_eq} requires $N$ radii values $\{ \alpha_i\}_{i=1}^N$ as input hyperparameters. We propose to set $\alpha_i = C \| h_i \|_2 / p$ for all $i$, where $p$ is the dimension of $h_i$, and thus the number of hyperparameters is reduced to only one relative parameter $C > 0$.

\section{Manifold Optimization for Steering}\label{sec:manifold opt}
This section is to devise an efficient algorithm for solving Problem~\eqref{eq:minlogdet_eq} based on Riemannian optimization. Formally, we let $\mc M_i$ be the sphere in $\R^{p}$ of radius $\sqrt{\alpha_i}$:
    \[
        \mc M_i = \{ v_i \in \R^{p} ~:~ \| v_i \|_2^2 = \alpha_i \},
    \]
and define the product manifold $\mc M= \mc M_{1} \times \cdots \times \mc M_{N}$. Problem~\eqref{eq:minlogdet_eq} can be cast as a Riemannian optimization problem over the product manifold $\mc M$, which is naturally solved using Riemannian optimization algorithms. 
\begin{figure}
    \centering
    \includegraphics[width=0.7\linewidth]{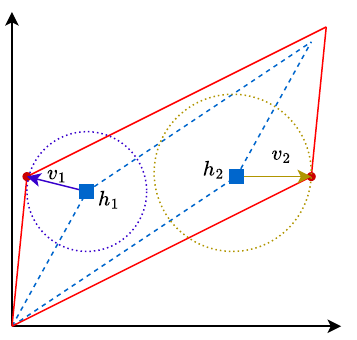}
    \caption{An illustration of the volume maximization intuition behind SPREAD. The hidden vectors $h_1$ and $h_2$ (originally blue squares) are pushed toward target positions using the corresponding steering vectors $v_1$, $v_2$, found via Riemannian Block Coordinate Descent (Algorithm~\ref{alg:algorithm1}). After intervention, the new parallelepiped (red) has a larger volume than the original parallelepiped (dashed blue). }
    \label{fig:placeholder}
\end{figure}
\subsection{Preliminaries about the Manifold $\mc M$}

Riemannian optimization exploits the geometric structure of the constraint set to perform updates along curved spaces (manifolds) rather than flat Euclidean space. Key tools include (i) the Riemannian gradient, which generalizes classical gradient methods to manifold settings, and (ii) the exponential map, which generalizes the concept of moving in a straight line along a gradient in Euclidean space to moving along a shortest path on a manifold.

At a point on a manifold, a tangent space is a vector space that ``touches" the manifold at that point and contains all possible directions in which one can move along the manifold from that point. For an individual sphere, the tangent space at $v_i \in \mathcal M_i$ is
    \[
        T_{v_i} \mc M_i = \{ z \in \R^{p} : z^\top v_i = 0 \}.
    \]
For the product manifold, the tangent space at $V = [v_1, \ldots, v_N]$ is  \[
T_V \mc M = \{Z=(z_1, \dots, z_N)~:~z_i \in T_{v_i} \mc M_i~ \forall i\}.
\]
The Riemannian gradient of $\ell$ defined on the manifold $\mathcal M$ is the steepest ascent direction that lies within the tangent space of the manifold at a given point. It is computed by first taking the Euclidean gradient of $\ell$ in the ambient space $\R^{p \times N}$, and then projecting this gradient onto the tangent space of the manifold. Let $G \in \R^{p \times N}$ be the computed Euclidean gradient of $\ell$ at $V$, the projection of $G$ onto $T_V \mc M$ is decomposable into $N$ projections of the columns $g_i$ onto the tangent space of the individual sphere $\mc M_i$
\[
\Proj_{T_{v_{i}}\mc M_{i}}(g_i)=(I-\frac{1}{\alpha_{i}}v_{i}v_{i}^{\top})g_i \qquad \forall i.
\]
The next lemma formalizes the computation of the Riemannian gradient for $\ell$.
\begin{lemma}\label{lemma:Riemannian gradient}
    For any $V=[v_{1},\dots,v_{N}]\in \mc M$, the Riemannian gradient of $\ell$ in the $i$-th block is given by
    \begin{equation} \label{eq:r-grad}
        \grad_{i} \ell(V) =  g_i - \frac{1}{\alpha_i} g_i^\top v_i v_i \in \R^{p},
    \end{equation}
    where $g_i = -2 ((H + V) M^{-1})_i \in \R^{p}$ and $M =I+(H+V)^{\top}(H+V)$. Furthermore, the Riemannian gradient of $\ell$ at point $V$ corresponding to the product manifold $\mc M$ is given by
    $
        \grad \ell(V) = [\grad_{1} \ell(V),\dots,\grad_{N} \ell(V)]$.
\end{lemma}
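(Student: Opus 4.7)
The plan is to derive the Euclidean gradient of $\ell$ with respect to $V$ in the ambient space $\R^{p\times N}$, and then invoke the projection formula for the sphere that is stated just before the lemma. The block-wise structure of the product manifold makes the second step decoupled.

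First, I would compute the Euclidean gradient. Writing $W=H+V$ and $M=I+W^\top W$, I differentiate $\ell(V)=-\log\det M$ by the standard matrix-calculus identities $\dd\log\det M=\Tr(M^{-1}\dd M)$ and $\dd M=\dd W^\top W + W^\top \dd W$. Exploiting symmetry of $M^{-1}$ and cyclicity of the trace collapses both summands to $2\Tr\bigl((WM^{-1})^\top\dd W\bigr)$. Since $\dd W=\dd V$, this yields $\nabla\ell(V)=-2(H+V)M^{-1}$, whose $i$-th column is precisely the vector $g_i$ appearing in the lemma.

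Next, I would transport this Euclidean gradient to the manifold. Because $\mc M=\mc M_1\times\cdots\times\mc M_N$ is a product of submanifolds embedded in $\R^p$, its tangent space at $V$ is the direct sum $T_V\mc M=T_{v_1}\mc M_1\oplus\cdots\oplus T_{v_N}\mc M_N$, and the orthogonal projection onto $T_V\mc M$ decomposes block-wise into the projections onto the individual $T_{v_i}\mc M_i$. Applying the projection formula $\Proj_{T_{v_i}\mc M_i}(z)=(I-\tfrac{1}{\alpha_i}v_iv_i^\top)z$ stated earlier to the $i$-th block $g_i$ gives
\[
\grad_i\ell(V)=g_i-\tfrac{1}{\alpha_i}(g_i^\top v_i)\,v_i,
\]
which is exactly the claimed formula. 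The full Riemannian gradient on $\mc M$ is then the concatenation of the blocks, as stated.

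There is no real obstacle here; the only slightly delicate step is handling the derivative of $\log\det(I+W^\top W)$ and verifying that both terms in $\dd M$ contribute equally via the symmetry of $M^{-1}$. Everything else follows from a textbook fact about tangent spaces of embedded spheres and the product-manifold structure.
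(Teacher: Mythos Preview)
Your proposal is correct and follows essentially the same route as the paper: compute the Euclidean gradient $\nabla\ell(V)=-2(H+V)M^{-1}$, extract its $i$-th column $g_i$, and project onto $T_{v_i}\mc M_i$ via $(I-\tfrac{1}{\alpha_i}v_iv_i^\top)$, with the product-manifold structure justifying the block-wise concatenation. The only cosmetic difference is that you spell out the $\dd\log\det$ computation explicitly, whereas the paper simply cites a matrix-calculus reference.
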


\begin{algorithm*}[!ht]
\caption{Riemannian Block Coordinate Descent on Product of Spheres with Exponential Maps}\label{alg:RBCD with Exponential Maps}
\label{alg:algorithm1}
\begin{algorithmic}[1]
\REQUIRE Hidden activation vectors $\{h_i\}_{i=1}^N$, magnitudes $\{\alpha_i\}_{i=1}^N$, learning rate $\eta_i > 0$, max iterations $K$ 
\STATE Initialize $v_i^{(0)} $ using~\eqref{eq:baseline_steering}
\FOR{$k = 1$ to $K$}
    \FOR{$i = 1$ to $N$}
        \STATE Compute Euclidean gradient in the variable $v_i$ by $g_i \gets \nabla_{i} \ell(v_1^{(k)}, \dots, v_i^{(k-1)}, \ldots, v_N^{(k-1)})$
        \STATE Compute the descent direction $d_{i} \leftarrow - g_i + \frac{1}{\alpha_i} ((v_i^{(k-1)})^{\top} g_i)\, v_i^{(k-1)}$
        \STATE Compute $v_i^{(k)} \gets \cos( \frac{ \eta_i \| d_{i} \|_2 }{\sqrt{\alpha_{i}}}) v_i^{(k-1)} + \sin ( \frac{\eta_i \| d_{i} \|_2 }{\sqrt{\alpha_{i}}}) \frac{d_{i}}{\| d_{i} \|_2} \sqrt{\alpha_i}$
    \ENDFOR
\ENDFOR
\STATE \textbf{return} $V^{(K)} =[v_1^{(K)}, \dots, v_N^{(K)} ]$ 
\end{algorithmic}
\end{algorithm*}
The \textit{negative} Riemannian gradient at $V$ gives the direction of steepest descent within the tangent space. The exponential map then moves the incumbent solution along this direction, not in a straight line, but along a curved path that fits the spherical surface. This allows us to take meaningful steps while staying on the manifold throughout the iterations. Given the descent direction $d_i = - \grad_{i} \ell(V)$ and a step size $\eta_i$ for the $i$-th block, the exponential map gives~\citep[Section 10.2]{ref:boumal2023introduction}
\begin{align}
        \label{eq:exp_map}
       &\Exp_{v_i}( \eta_i d_{i} ) = \\
       &\quad \cos( \frac{\eta_i \|  d_{i} \|_2 }{\sqrt{\alpha_{i}}}) v_i + \sin ( \frac{\eta_i \| d_{i} \|_2 }{\sqrt{\alpha_{i}}}) \frac{d_{i}}{\| d_{i} \|_2} \sqrt{\alpha_i} \in \mc M_i. \notag
\end{align}
Given $\Exp_{v_i}( \eta_i d_{i} )$ in~\eqref{eq:exp_map}, the exponential map on the product manifold $\mc M$ is given by
\begin{align}\label{eq:exp_map_product}
       &\Exp_{V}( [\eta_{1}d_{1}, \dots, \eta_{N}d_{N}] ) = \\
       &\quad [\Exp_{v_1}( \eta_1 d_{1} ),\dots, \Exp_{v_N}( \eta_N d_{N} ) ] \in \mc M .\notag
\end{align}

\subsection{Block-Coordinate Riemannian Descent}

Since our manifold $\mc M$ decomposes as a product of scaled spheres, its natural block structure allows one global update to be replaced by $N$ cheaper spherical updates. Consequently, Riemannian Block-coordinate Descent~\cite{ref:gutman2023coordinate} provides an intuitive and efficient method for Problem~\eqref{eq:minlogdet_eq}.

In the outer iteration $k$, Algorithm~\ref{alg:algorithm1} updates the blocks $v_{i}$ sequentially for $i=1,\dots,N$. Fixing all other blocks at their most recent values, it first computes the Euclidean gradient $g_{i}$. In Line 5, $g_{i}$ is then projected onto the tangent space to yield the Riemannian descent direction $d_{i}$, and then Line 6 moves $v_{i}^{(k-1)}$ along the geodesic on $\mc M_{i}$ via the exponential map~\eqref{eq:exp_map}, thus preserving its feasibility on the sphere.

\textbf{Initialization.} To get a feasible initial point, for each $i$, we can initialize $v_i^{(0)}$ by
\begin{equation}
    v_i^{(0)} = \sqrt{\alpha_i} \frac{h_i + \varepsilon_i - \bar h}{ \| h_i + \varepsilon_i - \bar h \|_2} \qquad \forall i.
    \label{eq:baseline_steering}
\end{equation}
where $\varepsilon_i \sim \mc N(0, \sigma^2 I_d)$ are independent Gaussian noise with small variances and $\bar h = \frac{1}{N} \sum_{i=1}^N h_i$ is the centroid of the hidden vectors $\{h_i\}_{i=1}^N$. 

\subsection{Convergence Analysis}

We now study the convergence guarantee of Algorithm~\ref{alg:algorithm1}. The next theorem asserts the convergence of Algorithm~\ref{alg:algorithm1} for solving Problem~\eqref{eq:minlogdet_eq} with well-chosen step sizes. To this end, let $\bar{\alpha}\Let \sum_{i=1}^{N}\alpha_{i}$ be the total radii, and for each sequence $i$, define the quantity
\begin{equation}\label{eq:L_i}
        L_{i} \Let 2+4(\|H \|_{F}+\sqrt{\bar{\alpha}})^2 + \frac{2}{\sqrt{\alpha_{i}}} (\|H \|_{F}+\sqrt{\bar{\alpha}}) \quad \forall i.
    \end{equation}
    
\begin{theorem}[Convergence of Algorithm~\ref{alg:RBCD with Exponential Maps}]\label{thm:convergence}
 Let  $ V^{(k)} = [v^{(k)}_1, \ldots, v^{(k)}_N]$ be the sequence generated from Algorithm~\ref{alg:RBCD with Exponential Maps} with learning rate $\eta_{i} = 1/L_{i}$ for $i=1,\dots,N$. Define
\[
C \Let \frac{L_{\min}^{2}}{4L_{\max}(L_{\min}^{2}+L^{2}N(N-1))} ,
\]
where $L_{\min}=\min_{i}L_{i}$ and $L_{\max}=\max_{i}L_{i}$.
Then, the following hold:
    \begin{itemize}
        \item We have
        $
        \lim_{k\to \infty} \| \grad~\ell(V^{(k)}) \|_{F} = 0$ and
        \[
        \min_{s \leq k} \| \grad~\ell(V^{(s)}) \|_{F} \leq \sqrt{\frac{\ell(V^{(0)}) - \ell\opt}{C k}},
        \]
        where $\ell\opt$ is the optimal value of Problem~\eqref{eq:minlogdet_eq}.
        \item Any limit point $V^\infty \in \mc M$ of $\{V^{(k)}\}_{k\geq1}$ is a stationary point, i.e., $\grad~\ell(V^\infty)=0$.
    \end{itemize}
\end{theorem}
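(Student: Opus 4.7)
The plan is to establish a block-wise Riemannian descent lemma with modulus $L_i$, combine the per-block decreases into a per-iteration drop, control the drift between intermediate and starting iterates via cross-block Lipschitz continuity, and finally telescope.

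First, I would uniformly bound the Riemannian Hessian of $\ell$ in each block $v_i$ over $\mc M$. Because every $V \in \mc M$ satisfies $\|V\|_F \leq \sqrt{\bar\alpha}$, we have $\|H+V\|_F \leq \|H\|_F + \sqrt{\bar\alpha}$, while $M = I + (H+V)^\top(H+V) \succeq I$ implies $\|M^{-1}\|_{op} \leq 1$. Differentiating the closed-form expression $g_i = -2((H+V)M^{-1})_i$ from Lemma~\ref{lemma:Riemannian gradient} and applying these ambient bounds produces a uniform upper bound on the Euclidean block Hessian $\nabla_i^2 \ell(V)$; converting to the Riemannian Hessian on the sphere $\mc M_i$ adds a curvature correction of order $\frac{1}{\sqrt{\alpha_i}}\|\nabla_i \ell\|_2$ coming from the Weingarten identity on the sphere, and the three terms assemble into exactly the constant $L_i$ in~\eqref{eq:L_i}.

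Second, standard Riemannian smoothness along geodesics turns this Hessian bound into a descent inequality. Denoting by $V_i^{(k)}$ the iterate after updating the first $i$ blocks in outer loop $k$, the update $v_i^{(k)} = \Exp_{v_i^{(k-1)}}(\eta_i d_i)$ with $d_i = -\grad_i \ell(V_{i-1}^{(k)})$ and $\eta_i = 1/L_i$ yields
\[
\ell(V_i^{(k)}) \leq \ell(V_{i-1}^{(k)}) - \frac{1}{2L_i}\|\grad_i \ell(V_{i-1}^{(k)})\|_2^2.
\]
Summing over $i = 1, \ldots, N$ gives the per-outer-iteration drop
\[
\ell(V^{(k-1)}) - \ell(V^{(k)}) \geq \sum_{i=1}^N \frac{1}{2L_i}\|\grad_i \ell(V_{i-1}^{(k)})\|_2^2.
\]

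Third, to replace the intermediate norms by $\|\grad \ell(V^{(k-1)})\|_F$, I would prove a cross-block Lipschitz estimate: for each pair $i \neq j$ the map $v_j \mapsto \grad_i \ell$ is $L$-Lipschitz, with $L$ depending only on $\|H\|_F$ and $\sqrt{\bar\alpha}$ through the same ambient bounds on $M^{-1}$ and $H+V$. Combining this with the geodesic displacement bound $d_{\mc M_j}(v_j^{(k-1)}, v_j^{(k)}) \leq \eta_j \|\grad_j \ell(V_{j-1}^{(k)})\|_2$ on the sphere, together with the Cauchy--Schwarz and triangle inequalities and the $\eta_j = 1/L_j$ choice, produces after algebraic simplification
\[
\|\grad \ell(V^{(k-1)})\|_F^2 \leq C^{-1} \bigl[\ell(V^{(k-1)}) - \ell(V^{(k)})\bigr]
\]
with $C$ exactly the constant stated in the theorem; the $N(N-1)$ factor enters when summing the pairwise drift estimates over all ordered pairs $(i,j)$. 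Telescoping against the global lower bound $\ell\opt$ yields $\sum_{s=0}^{k-1}\|\grad\ell(V^{(s)})\|_F^2 \leq (\ell(V^{(0)}) - \ell\opt)/C$, from which both the min-rate bound $\min_{s \leq k}\|\grad\ell(V^{(s)})\|_F \leq \sqrt{(\ell(V^{(0)})-\ell\opt)/(Ck)}$ and $\lim_{k\to\infty}\|\grad\ell(V^{(k)})\|_F = 0$ follow immediately. For the second bullet, compactness of $\mc M$ and continuity of $\grad\ell$ suffice: if a subsequence $V^{(k_j)} \to V^\infty$, then $\grad\ell(V^{(k_j)}) \to \grad\ell(V^\infty)$, which must be zero by the first bullet.

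The hard part will be the first step: carefully decomposing the Euclidean block Hessian of the log-determinant and quantifying the sphere's curvature correction so as to recover the precise three-term coefficient~\eqref{eq:L_i} rather than a loose bound of the same order, and then tracking the cross-block Lipschitz constant $L$ through the same ambient estimates so that the numerology assembles into exactly $L_{\min}^{2}+L^{2}N(N-1)$ in the denominator of $C$.
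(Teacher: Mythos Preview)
Your proposal is correct but takes a markedly different route from the paper. The paper's proof is almost entirely a reduction: it establishes global $L$-smoothness (Proposition~\ref{prop:L-smooth}) and block $L_i$-smoothness (Proposition~\ref{prop:block_L-smooth})---which together coincide with your first step and the Lipschitz ingredient of your third step---and then simply invokes a general Riemannian block-coordinate descent convergence theorem (Theorem~3 of \citet{ref:gutman2023coordinate}) after checking that the cyclic rule satisfies the $(1,\infty)$-norm condition and that the block descent inequality~\eqref{eq:block_L-smooth2} holds. Your proposal, by contrast, reproves that abstract convergence theorem in this concrete setting: the per-block descent lemma, the drift control via cross-block Lipschitz continuity with the $N(N-1)$ pairwise count, and the telescoping argument are exactly the skeleton of the Gutman result. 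What your approach buys is self-containment---no external black box---at the cost of redoing a nontrivial amount of bookkeeping that the paper offloads to the citation; what the paper's approach buys is brevity and a clean separation between the problem-specific smoothness analysis (the Hessian bounds you describe in step one) and the generic optimization machinery.
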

Theorem~\ref{thm:convergence} establishes both asymptotic and non-asymptotic convergence results for Algorithm~\ref{alg:algorithm1}. It guarantees that any limit point of the generated sequence is a stationary point of the objective function. The sublinear rate $O(1/\sqrt{k})$ for the minimum gradient norm bounds the number of iterations needed to achieve a desired accuracy level. 

To analyze the convergence of the algorithm, we study the smoothness of the objective function. Firstly, we recite the $ L$-smoothness definition in the Riemannian sense~\citep[equation (3)]{ref:gutman2023coordinate}.
\begin{definition}[$L$-smoothness]\label{def:L-smoothness}
        The function $\ell: \mathcal{M} \to \R$ is $L$-smooth if for all $V \in \mc M$, $U\in T_{V}\mc M$ and $Z=\Exp_{V}(U)$ it holds that
        \[
        \|\Gamma_{V\to Z}^{U}~\grad \ell(V)  -  \grad \ell(Z)\|_{F} \leq L \| U\|_{F},
        \]
        where $\Gamma_{V\to Z}^{U}: T_{V}\mc M \to T_{Z}\mc M$ is the parallel transport operator along the curve $\gamma (t)=\Exp_{V}(tU)$.
    \end{definition}

The following lemma shows that the objective function $\ell(V)$ of Problem~\eqref{eq:minlogdet_eq} satisfies Definition~\ref{def:L-smoothness} with an explicit Lipschitz constant $L$.

\begin{proposition}[Smoothness]\label{prop:L-smooth}
       Let $\alpha_{\min} \Let \min_{i}\alpha_{i} > 0$. The objective function $\ell(V)$ is $L$-smooth, with constant
        \begin{equation}
        L =  2 + 4(\|H\|_F + \sqrt{\bar{\alpha}} )^{2} + \frac{2}{\sqrt{\alpha_{\min}}}( \| H\|_{F}+\sqrt{\bar{\alpha}} ).
        \end{equation}
        
\end{proposition}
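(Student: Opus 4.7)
The plan is to reduce Definition~\ref{def:L-smoothness} to a uniform operator-norm bound on the Riemannian Hessian $\Hess\,\ell$ over $\mc M$, and then to produce such a bound explicitly. The reduction relies on the standard fact that on a complete Riemannian manifold, integrating $\Hess\,\ell$ along a geodesic $\gamma(t)=\Exp_V(tU)$ and using parallel transport yields $\|\Gamma^{U}_{V\to Z}\grad\,\ell(V)-\grad\,\ell(Z)\|_F\leq \big(\sup_{V\in \mc M}\|\Hess\,\ell(V)\|_{\mathrm{op}}\big)\|U\|_F$, so it suffices to show that the supremum on the right is at most the stated $L$. I will assemble this bound from three ingredients: a Euclidean-gradient bound, a Euclidean-Hessian bound, and the Weingarten correction coming from the curvature of each spherical factor.

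First, for the Euclidean gradient, starting from $\nabla\ell(V)=-2(H+V)M^{-1}$ with $M=I+(H+V)^{\top}(H+V)\succeq I$, I use $\|M^{-1}\|_{\mathrm{op}}\leq 1$ together with $\|V\|_F^{2}=\bar\alpha$ (which holds on $\mc M$) and the triangle inequality to obtain
\[
\|\nabla \ell(V)\|_F \;\leq\; 2\|H+V\|_F \;\leq\; 2(\|H\|_F+\sqrt{\bar\alpha}).
\]
Next, differentiating $\nabla\ell$ in a direction $Z$ and using $\frac{d}{dt}M(V+tZ)^{-1}\big|_{t=0}=-M^{-1}(Z^{\top}X+X^{\top}Z)M^{-1}$ with $X=H+V$, I obtain $\nabla^{2}\ell(V)[Z]=-2ZM^{-1}+2XM^{-1}(Z^{\top}X+X^{\top}Z)M^{-1}$. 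Applying Frobenius norms with $\|M^{-1}\|_{\mathrm{op}}\leq 1$ and $\|X\|_{\mathrm{op}}\leq\|X\|_F\leq\|H\|_F+\sqrt{\bar\alpha}$ yields the Euclidean-Hessian bound
\[
\|\nabla^{2}\ell(V)[Z]\|_F \;\leq\; \big(2+4(\|H\|_F+\sqrt{\bar\alpha})^{2}\big)\|Z\|_F .
\]

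To pass from the Euclidean to the Riemannian Hessian on the product of spheres, I use the Weingarten formula for embedded submanifolds. Since $\mc M=\mc M_1\times\cdots\times\mc M_N$ is a Riemannian product, the normal correction decouples across the blocks: for any $U=(u_1,\ldots,u_N)\in T_V\mc M$,
\[
\Hess\,\ell(V)[U] \;=\; \Proj_{T_V\mc M}\!\big(\nabla^{2}\ell(V)[U]\big)\;-\;\sum_{i=1}^{N}\frac{\langle v_i,\nabla_{i}\ell(V)\rangle}{\alpha_i}\,u_i ,
\]
where each $-\tfrac{1}{\alpha_i}\langle v_i,\nabla_{i}\ell(V)\rangle u_i$ is the Weingarten term for the sphere of radius $\sqrt{\alpha_i}$. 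The projection is non-expansive, and the Cauchy--Schwarz inequality together with $|\langle v_i,\nabla_i\ell(V)\rangle|\leq \sqrt{\alpha_i}\|\nabla_i\ell(V)\|_F$ gives
\[
\Big\|\sum_{i=1}^{N}\frac{\langle v_i,\nabla_{i}\ell(V)\rangle}{\alpha_i}u_i\Big\|_F \;\leq\; \frac{1}{\sqrt{\alpha_{\min}}}\|\nabla\ell(V)\|_F\,\|U\|_F .
\]
Combining these two pieces and substituting the Euclidean bounds established above yields exactly $\|\Hess\,\ell(V)\|_{\mathrm{op}}\leq 2+4(\|H\|_F+\sqrt{\bar\alpha})^{2}+\tfrac{2}{\sqrt{\alpha_{\min}}}(\|H\|_F+\sqrt{\bar\alpha})$, which is the claimed $L$.

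The main obstacle I expect is the passage from a pointwise operator-norm bound on $\Hess\,\ell$ to the parallel-transport inequality of Definition~\ref{def:L-smoothness}: this requires verifying that the standard fundamental-theorem-of-calculus argument along geodesics applies cleanly on the product manifold $\mc M$, in particular that parallel transport commutes with the block decomposition and that $\Exp_V$ is well-defined for all tangent directions (which is immediate since each spherical factor is complete). Once this geometric equivalence is in place, the algebraic bookkeeping reduces to the elementary matrix inequalities above, all driven by the single key identity $\|M^{-1}\|_{\mathrm{op}}\leq 1$.
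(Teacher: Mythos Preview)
Your proposal is correct and follows essentially the same route as the paper: both arguments bound the Riemannian Hessian by decomposing it into the projected Euclidean Hessian plus the Weingarten (normal) correction on each spherical factor, bound each piece using $\|M^{-1}\|_{\mathrm{op}}\le 1$ and $\|H+V\|_F\le\|H\|_F+\sqrt{\bar\alpha}$, and then invoke the standard Hessian-to-Lipschitz reduction (the paper cites Boumal's Corollary~10.47 for what you call the fundamental-theorem-of-calculus argument). One cosmetic point: your expression $-\sum_{i}\tfrac{\langle v_i,\nabla_i\ell(V)\rangle}{\alpha_i}u_i$ should be read as the matrix whose $i$-th column is that vector (i.e., the paper's $U\Lambda$), not a literal vector sum; with that understanding your norm estimate on the correction term matches the paper's exactly.
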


We further show that the function $\ell$ also satisfies the block smoothness, which is defined by restricting Definition~\ref{def:L-smoothness} to each individual sphere $\mc M_{i}$. 

\begin{proposition}[Block smoothness]\label{prop:block_L-smooth}
    Let $V=[v_{1},\dots,v_{N}] \in \mc M$, $U=[u_{1},\dots,u_{N}]\in T_{V}\mc M$ and $Z=\Exp_{V}(U)$. For any $v_{i}$ and $u_{i} \in T_{v_{i}}\mc M_{i}$, it holds that
    \begin{equation}\label{eq:block_L-smooth}
         \|\Gamma_{v_{i}\to z_{i}}^{u_{i}}~\grad_{i} \ell(V)  -  \grad_{i} \ell(Z)\|_{F} \leq L_{i} \| u_{i}\|_{2}
    \end{equation}
    with $L_{i}$ defined in \eqref{eq:L_i}. Here $\Gamma_{v_{i}\to z_{i}}^{u_{i}}:T_{v_{i}}\mc M_{i} \to T_{z_{i}}\mc M_{i}$ is the parallel transport operator along the curve $\gamma(t)=\Exp_{v_{i}}(t u_{i})$.
\end{proposition}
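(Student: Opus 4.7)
The plan is to refine the proof of Proposition~\ref{prop:L-smooth} by specializing it to a single block. If we take $U\in T_{V}\mc M$ whose only non-zero entry is $u_{i}\in T_{v_{i}}\mc M_{i}$, then $\|U\|_{F}=\|u_{i}\|_{2}$, the product exponential map~\eqref{eq:exp_map_product} acts as the identity on every factor except $\mc M_i$, and parallel transport on $\mc M$ along the geodesic $\gamma(t)=\Exp_V(tU)$ reduces in the $i$-th coordinate to parallel transport on $\mc M_i$ along the curve $t\mapsto\Exp_{v_i}(tu_i)$. So~\eqref{eq:block_L-smooth} is really a block-wise variant of Definition~\ref{def:L-smoothness} restricted to the single sphere $\mc M_i$, which lets us replace the conservative radius $\sqrt{\alpha_{\min}}$ appearing in $L$ with the tighter $\sqrt{\alpha_i}$.

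\textbf{Key steps.} First, freeze all blocks $v_{j}$ with $j\neq i$ and regard $\phi(v_{i})\Let \ell(v_{1},\dots,v_{i},\dots,v_{N})$ as a function on $\mc M_{i}$; by Lemma~\ref{lemma:Riemannian gradient} we have $\grad\phi(v_{i})=\grad_{i}\ell(V)=g_{i}-\alpha_{i}^{-1}(g_{i}^{\top}v_{i})v_{i}$. Second, I would bound the Euclidean gradient $g_{i}=-2((H+V)M^{-1})_{i}$ and its Euclidean Lipschitz constant in $v_i$. Using $M\succeq I$ (hence $\|M^{-1}\|_{\mathrm{op}}\le 1$) and $\|V\|_{F}=\sqrt{\bar\alpha}$, we get
\begin{equation*}
    \|g_{i}\|_{2}\;\le\;2\|H+V\|_{F}\;\le\;2\bigl(\|H\|_{F}+\sqrt{\bar\alpha}\bigr)\;\eqqcolon\;B.
\end{equation*}
Differentiating $g_{i}$ once more in $v_{i}$ produces terms of the form $-2 e_{i} M^{-1} e_{i}^{\top}$ and $4((H+V)M^{-1}e_{i})((H+V)M^{-1})_{i}$, whose Euclidean operator norms are controlled by $2$ and $4\|H+V\|_{F}^{2}\le 4(\|H\|_{F}+\sqrt{\bar\alpha})^{2}$ respectively, giving a Euclidean block-Lipschitz constant $L_E \le 2+4(\|H\|_{F}+\sqrt{\bar\alpha})^{2}$. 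Third, I would convert this Euclidean estimate into a Riemannian one on the sphere $\mc M_i$ of radius $\sqrt{\alpha_i}$ via the standard identity $L_i = L_E + B/\sqrt{\alpha_i}$, yielding precisely~\eqref{eq:L_i}.

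\textbf{Main obstacle.} The delicate step is the Euclidean-to-Riemannian conversion on $\mc M_i$. Writing
\begin{equation*}
\Gamma_{v_{i}\to z_{i}}^{u_{i}}\grad_{i}\ell(V)-\grad_{i}\ell(Z)
=\underbrace{\Gamma_{v_{i}\to z_{i}}^{u_{i}}g_{i}(V)-g_{i}(Z)}_{\text{(a) Euclidean part}}
+\underbrace{\Bigl[\tfrac{g_{i}(Z)^{\top}z_{i}}{\alpha_{i}}z_{i}-\Gamma_{v_{i}\to z_{i}}^{u_{i}}\tfrac{g_{i}(V)^{\top}v_{i}}{\alpha_{i}}v_{i}\Bigr]}_{\text{(b) projection/transport part}},
\end{equation*}
term (a) is bounded using the Euclidean Lipschitz constant $L_{E}$ together with the fact that parallel transport on a sphere is a (Euclidean-)isometry plus a normal correction of size at most $\|u_{i}\|_{2}/\sqrt{\alpha_i}$ per unit of $\|g_i\|_2$; term (b) is bounded using the explicit closed-form geodesic $z_{i}=\cos(\|u_{i}\|_{2}/\sqrt{\alpha_i})v_{i}+\sin(\|u_{i}\|_{2}/\sqrt{\alpha_i})u_{i}\sqrt{\alpha_{i}}/\|u_{i}\|_{2}$ and the Lipschitzness of the projector $I-\alpha_{i}^{-1}v_{i}v_{i}^{\top}$, which scales like $1/\sqrt{\alpha_i}$ in $\|u_{i}\|_{2}$. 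Multiplying this geometric factor by the gradient bound $B$ produces exactly the $\tfrac{2}{\sqrt{\alpha_i}}(\|H\|_{F}+\sqrt{\bar\alpha})$ summand in $L_{i}$. The bookkeeping to combine (a) and (b) into the single Lipschitz constant $L_{i}$ without loosening any of the three summands is the only non-routine piece; everything else is a specialization of the arguments already used to prove Proposition~\ref{prop:L-smooth}.
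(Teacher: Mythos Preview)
Your proposal is correct in spirit and would lead to the same constant $L_{i}$, but it takes a more hands-on route than the paper. You work directly with Definition~\ref{def:L-smoothness}: you extend the spherical parallel transport to an ambient rotation, split $\Gamma_{v_i\to z_i}^{u_i}\grad_i\ell(V)-\grad_i\ell(Z)$ into a ``Euclidean'' piece and a ``projector'' piece, and then control each with the explicit geodesic formula and the Lipschitzness of $v\mapsto I-\alpha_i^{-1}vv^\top$. This is viable, but as you note the bookkeeping needed to land exactly on the three summands of $L_i$ (and not a looser multiple) is delicate.

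The paper bypasses this entirely by bounding the \emph{block Riemannian Hessian} and invoking the standard fact (\cite[Corollary~10.47]{ref:boumal2023introduction}) that a uniform Hessian bound implies $L$-smoothness in the sense of Definition~\ref{def:L-smoothness}. Concretely, the paper uses the block analogue of Proposition~\ref{prop:formula_Hess},
\[
\Hess_{i}\ell(V)[u_{i}]=\Proj_{T_{v_i}\mc M_i}\bigl(\nabla^{2}_{i}\ell(V)[u_{i}]\bigr)-\tfrac{1}{\alpha_i}(g_i^\top v_i)\,u_i,
\]
bounds the first term by $2+4(\|H\|_F+\sqrt{\bar\alpha})^2$ exactly as in Lemma~\ref{lem:bound_hessian}, and bounds the second by $\tfrac{1}{\sqrt{\alpha_i}}\|g_i\|_2\le \tfrac{2}{\sqrt{\alpha_i}}(\|H\|_F+\sqrt{\bar\alpha})$, giving $L_i$ on the nose. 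The advantage of the Hessian route is that parallel transport never appears explicitly, so the ``main obstacle'' you identify simply does not arise; the advantage of your route is that it is self-contained and does not rely on the cited corollary.
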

Proposition \ref{prop:block_L-smooth} is crucial both for establishing the convergence of Algorithm~\ref{alg:algorithm1} and for guiding the selection of its step‐sizes. Equation~\eqref{eq:block_L-smooth} together with~\citet[Corollary 10.54]{ref:boumal2023introduction} implies that for each $i$ it holds
    \begin{align}
        & \ell(v_{1},\dots,\Exp_{v_{i}}(u_{i}),\dots,v_{i}) \leq \label{eq:block_L-smooth2} \\
        & \hspace{2cm} \ell(V) + \langle \grad_{i}~\ell(V),u_{i} \rangle + \frac{L_{i}}{2} \|u_{i} \|_{2}^{2}.\notag
    \end{align}
Thus, in particular, if one sets $\eta_{i}=1/L_{i}$ then each coordinate‐descent update yields a provable decrease in the objective~\cite[Lemma~1]{ref:gutman2023coordinate}. This obviates the need for extensive tuning of learning rates $\eta_{i}$.

\begin{table*}[]
\begin{tabular}{l|c|ccc|ccc|ccc}
\hline
\multirow{3}{*}{Model}                                                                 & \multirow{3}{*}{Temp.} & \multicolumn{3}{c|}{AIME24}                               & \multicolumn{3}{c|}{MATH500}                              & \multicolumn{3}{c}{OlympiadBench}                         \\ \cline{3-11} 
&                                       & \multicolumn{2}{c}{SPREAD}    & \multirow{2}{*}{Sampling} & \multicolumn{2}{c}{SPREAD}    & \multirow{2}{*}{Sampling} & \multicolumn{2}{c}{SPREAD}    & \multirow{2}{*}{Sampling} \\ \cline{3-4} \cline{6-7} \cline{9-10}
&                                       & $C = 1$         & $C = 10$        &                           & $C = 1$         & $C = 10$        &                           & $C = 1$         & $C = 10$        &                           \\ \hline
\multirow{5}{*}{Qwen2.5-1.5B}                                                          & 1.0                                   & 3.3           & \textbf{6.7}  & 0.0                       & 43.2          & \textbf{43.4} & 42.8                      & \textbf{21.5} & 19.7          & 19.0                      \\
& 0.8                                   & \textbf{6.7}  & \textbf{6.7}  & 3.3                       & 51.8          & \textbf{54.2} & 51.4                      & 25.3          & \textbf{27.0} & 25.9                      \\
& 0.6                                   & \textbf{10.0} & 3.3           & 3.3                       & \textbf{55.0} & 54.0          & 53.4                      & 28.4          & 28.3          & \textbf{29.3}             \\
& 0.4                                   & 6.7           & 6.7           & 6.7                       & 56.2          & \textbf{57.6} & 55.8                      & \textbf{30.8} & 30.1          & 30.7                      \\
& 0.2                                   & \textbf{10.0} & 6.7           & 6.7                       & \textbf{55.6} & 53.8          & 52.2                      & \textbf{28.0} & 27.1          & 26.4                      \\ \hline
\multirow{5}{*}{\begin{tabular}[c]{@{}l@{}}Qwen2.5-Math\\ -1.5B-Instruct\end{tabular}} & 1.0                                   & \textbf{26.7} & 16.7          & 20.0                      & 83.8          & 83.6          & \textbf{84.6}             & 47.6          & \textbf{49.5} & 48.3                      \\
& 0.8                                   & \textbf{26.7} & 23.3          & 16.7                      & \textbf{85.2} & 85.0          & 83.6                      & 49.9          & 49.2          & \textbf{51.0}             \\
& 0.6                                   & \textbf{26.7} & \textbf{26.7} & 20.0                      & \textbf{85.4} & 84.4          & 84.6                      & 50.4          & \textbf{51.0} & 50.8                      \\
& 0.4                                   & 23.3          & 23.3          & 23.3                      & \textbf{84.6} & 84.2          & 84.0                      & \textbf{51.7} & 48.8          & 51.0                      \\
& 0.2                                   & 20.0          & \textbf{26.7} & 16.7                      & 82.4          & 84.4          & 84.4                      & \textbf{52.3} & 51.1          & 49.9                     
\end{tabular}
\caption{Pass@$N$ $\uparrow$ (in \%) performance comparison across model variants and sampling temperature on three mathematical reasoning benchmarks. Bold indicates the best method in each dataset.}
\label{fig:results-pass@n}
\end{table*}

\section{Numerical Experiment}\label{sec:experiment}
We evaluate SPREAD across three established mathematical reasoning benchmarks: AIME24 (30 problems),  MATH500 (500 problems), and OlympiadBench (675 problems). We evaluate using the following metrics:
\begin{itemize}
    \item \textit{Pass@$N$}: The proportion of problems for which at least one of the $N$ generated solutions produces the correct final answer.
    \item \textit{Solution Diversity}: The float score indicating the overall diversity among $N$ solution
    \item \textit{Unique Solution Count}: The number of distinct solution approaches among $N$ solutions.
\end{itemize}
For the latter two metrics, we employ a language-model-as-a-judge paradigm using GPT-4.1-mini. We prompt the model with the question and $N$ generated solutions, requesting a diversity score (float in $[0,1]$) and a count of unique approaches (integer). The specific prompts and parameters for this evaluation are detailed in the Appendix.
Our experiments utilize two model variants: Qwen2.5-1.5B (base) and Qwen2.5-Math-1.5B-Instruct (math-specialized). Steering vectors are applied at layer 28, which corresponds to the final layer in both architectures. The hidden activations have dimension $p = 1536$. The magnitude $\alpha_i$ is chosen as $\alpha_i = C \|h_i\|_2 / p$, where $C \in \{1, 10\}$ is a scaling constant.
Steering vectors are computed using Algorithm~\ref{alg:algorithm1} at token positions $\tau \in \{100, 600, 1100, 1600\}$ with $K = 20$, and the learning rates are calibrated using the input activations following Theorem~\ref{thm:convergence}. 
All experiments are conducted on NVIDIA RTX A5000 24GB hardware using temperature sampling for solution generation with temperature $ \in \{0.2, 0.4, 0.6, 0.8, 1.0 \}$ and a maximum generation length of 2048 tokens. To ensure reproducibility, we fix the random seed to 42 for all experimental runs. Additional numerical results are presented in the appendix. 

\subsection{Experiment Results}

Table~\ref{fig:results-pass@n} summarizes the performance of SPREAD compared to temperature sampling across various benchmarks and temperature settings. SPREAD with either $C=1$ or $C=10$ consistently performs at least as well as vanilla temperature sampling, and often achieves improvements of several percentage points. Overall, SPREAD decoding (especially with $C=1$) offers the strongest results under most conditions. 
Table~\ref{fig:results-uniquesolution} demonstrates that SPREAD consistently outperforms temperature sampling in generating unique solutions across multiple benchmarks. These results highlight SPREAD’s robustness and effectiveness in promoting correct and diverse reasoning trajectories across various model variants and datasets.

\begin{table*}[]
\begin{tabular}{c|c|c|ccc|cccccc}
\hline
\multirow{3}{*}{Metric}                                 & \multirow{3}{*}{Model}                                                                       & \multirow{3}{*}{Temp.} & \multicolumn{3}{c|}{AIME24}                               & \multicolumn{3}{c}{MATH500}                                                    & \multicolumn{3}{c}{OlympiadBench}                         \\ \cline{4-12} 
&                                                                                              &                        & \multicolumn{2}{c}{SPREAD}    & \multirow{2}{*}{Sampling} & \multicolumn{2}{c}{SPREAD}    & \multicolumn{1}{c|}{\multirow{2}{*}{Sampling}} & \multicolumn{2}{c}{SPREAD}    & \multirow{2}{*}{Sampling} \\ \cline{4-5} \cline{7-8} \cline{10-11}
&                                                                                              &                        & $C = 1$       & $C = 10$      &                           & $C = 1$       & $C = 10$      & \multicolumn{1}{c|}{}                          & $C = 1$       & $C = 10$      &                           \\ \hline
\multirow{10}{*}{\rotatebox{90}{Unique Solution Count $\uparrow$}} & \multirow{5}{*}{\begin{tabular}[c]{@{}c@{}}Qwen2.5\\ -1.5B\end{tabular}}                     & 1.0                    & \textbf{6.97} & 6.60          & 6.67                      & 3.14          & \textbf{3.21} & \multicolumn{1}{c|}{3.14}                      & 6.12          & 6.14          & 6.14                      \\
&                                                                                              & 0.8                    & \textbf{6.83} & 6.43          & 3.6                       & 3.03          & \textbf{3.05} & \multicolumn{1}{c|}{2.97}                      & 5.99          & 5.99          & \textbf{6.04}             \\
&                                                                                              & 0.6                    & \textbf{6.40} & 6.07          & 3.37                      & \textbf{2.91} & 2.86          & \multicolumn{1}{c|}{2.90}                      & 5.57          & 5.65          & \textbf{5.58}             \\
&                                                                                              & 0.4                    & 6.00          & \textbf{6.20} & 6.17                      & 2.73          & 2.74          & \multicolumn{1}{c|}{\textbf{2.77}}             & \textbf{5.30} & 5.30          & 5.26                      \\
&                                                                                              & 0.2                    & 5.17          & \textbf{5.60} & 2.97                      & 2.47          & 2.56          & \multicolumn{1}{c|}{\textbf{2.59}}             & 4.65          & 4.69          & 4.69                      \\ \cline{2-12} 
& \multirow{5}{*}{\begin{tabular}[c]{@{}c@{}}Qwen2.5\\ -Math\\ -1.5B\\ -Instruct\end{tabular}} & 1.0                    & 6.63          & \textbf{7.03} & 3.67                      & 1.92          & \textbf{1.94} & \multicolumn{1}{c|}{1.93}                      & \textbf{4.59} & 4.53          & 4.57                      \\
&                                                                                              & 0.8                    & 6.63          & \textbf{6.73} & 3.47                      & 1.87          & \textbf{1.90} & \multicolumn{1}{c|}{1.89}                      & 4.28          & 4.36          & \textbf{4.39}             \\
&                                                                                              & 0.6                    & 6.27          & \textbf{6.47} & 3.5                       & 1.81          & \textbf{1.84} & \multicolumn{1}{c|}{1.82}                      & 4.10          & \textbf{4.16} & 4.11                      \\
&                                                                                              & 0.4                    & \textbf{6.23} & 5.87          & 6.17                      & 1.77          & 1.73          & \multicolumn{1}{c|}{\textbf{1.78}}             & \textbf{3.99} & 3.92          & 3.89                      \\
&                                                                                              & 0.2                    & \textbf{5.87} & \textbf{5.87} & 5.53                      & 1.70          & 1.74          & \multicolumn{1}{c|}{1.74}                      & 3.67          & 3.65          & \textbf{3.68}             \\ \hline
\multirow{10}{*}{\rotatebox{90}{Diversity Score $\uparrow$}}       & \multirow{5}{*}{\begin{tabular}[c]{@{}c@{}}Qwen2.5\\ -1.5B\end{tabular}}                     & 1.0                    & \textbf{0.37} & 0.33          & 0.40                      & \textbf{0.40} & 0.39          & \multicolumn{1}{c|}{0.39}                      & 0.38          & \textbf{0.40} & 0.39                      \\
&                                                                                              & 0.8                    & \textbf{0.51} & 0.45          & 0.37                      & 0.42          & \textbf{0.44} & \multicolumn{1}{c|}{0.41}                      & 0.46          & 0.46          & 0.46                      \\
&                                                                                              & 0.6                    & \textbf{0.52} & 0.46          & 0.39                      & \textbf{0.41} & 0.40          & \multicolumn{1}{c|}{0.40}                      & 0.46          & 0.46          & 0.46                      \\
&                                                                                              & 0.4                    & 0.41          & \textbf{0.47} & 0.42                      & 0.36          & 0.36          & \multicolumn{1}{c|}{0.36}                      & 0.42          & \textbf{0.43} & 0.42                      \\
&                                                                                              & 0.2                    & 0.31          & \textbf{0.34} & 0.33                      & \textbf{0.29} & \textbf{0.29} & \multicolumn{1}{c|}{0.28}                      & 0.33          & \textbf{0.35} & \textbf{0.35}             \\ \cline{2-12} 
& \multirow{5}{*}{\begin{tabular}[c]{@{}c@{}}Qwen2.5\\ -Math\\ -1.5B-\\ Instruct\end{tabular}} & 1.0                    & 0.64          & \textbf{0.65} & 0.63                      & 0.25          & 0.25          & \multicolumn{1}{c|}{0.25}                      & \textbf{0.39} & \textbf{0.39} & 0.38                      \\
&                                                                                              & 0.8                    & 0.60          & \textbf{0.62} & 0.57                      & 0.23          & 0.23          & \multicolumn{1}{c|}{0.23}                      & 0.36          & \textbf{0.38} & 0.37                      \\
&                                                                                              & 0.6                    & \textbf{0.57} & 0.55          & 0.55                      & 0.21          & \textbf{0.22} & \multicolumn{1}{c|}{0.20}                      & 0.34          & 0.34          & 0.34                      \\
&                                                                                              & 0.4                    & 0.51          & 0.51          & 0.51                      & \textbf{0.19} & 0.18          & \multicolumn{1}{c|}{0.18}                      & \textbf{0.31} & 0.30          & 0.31                      \\
&                                                                                              & 0.2                    & 0.46          & \textbf{0.52} & 0.41                      & 0.15          & \textbf{0.16} & \multicolumn{1}{c|}{0.15}                      & 0.26          & 0.26          & 0.26                     
\end{tabular}
\caption{Unique Solution Count $\uparrow$ and Diversity Score $\uparrow$ comparison across model variants and sampling temperature on three mathematical reasoning benchmarks. Bold indicates the best method in each dataset.}
\label{fig:results-uniquesolution}
\end{table*}

\begin{figure}[htbp]
    \centering
    \includegraphics[width=0.43\textwidth]{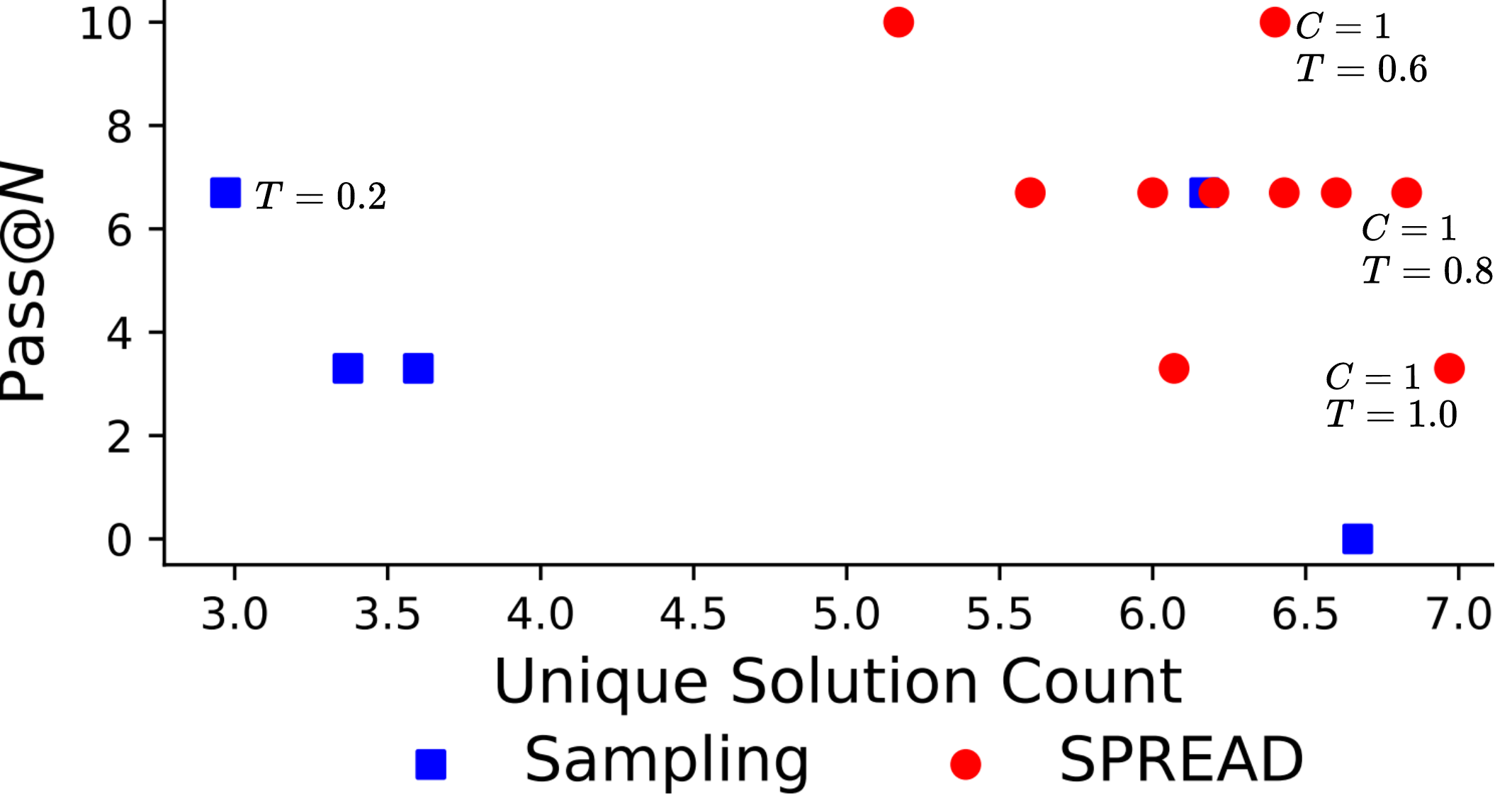}
    \caption{Comparison of SPREAD and sampling methods on AIME24 dataset using Qwen2.5-1.5B model, showing Pass@$N$ and Unique Solution Count.}
    \label{fig:comparison-aime24}
\end{figure}
\begin{figure}[htbp]
    \centering
    \includegraphics[width=0.43\textwidth]{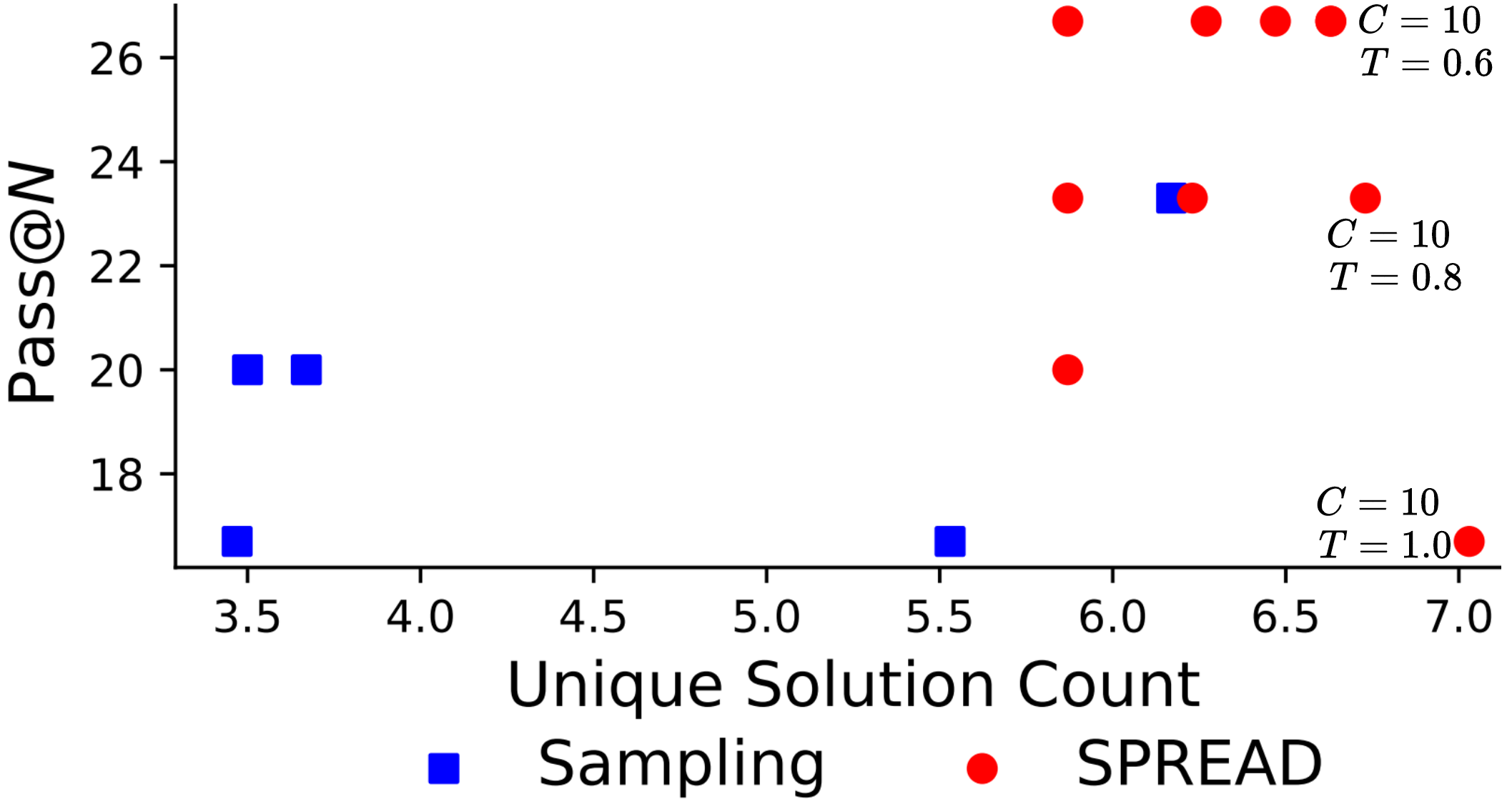}
    \caption{Comparison of SPREAD and sampling methods on AIME24 dataset using Qwen2.5-Math-1.5B-Instruct model, showing Pass@$N$ and Unique Solution Count.}
    \label{fig:comparison-math}
\end{figure}
To strengthen the comparison, we aggregate the metrics and analyze the accuracy-diversity frontiers in Figure \ref{fig:comparison-aime24} and~\ref{fig:comparison-math} for the AIME24 dataset. We can observe that the performance of SPREAD (red circles) dominates that of vanilla sampling methods with different temperatures (blue squares). The Pareto plots for the MATH and OlympiadBench datasets are presented in the appendix. 

\subsection{Computational Efficiency}

\begin{figure}[htbp]
    \centering
    \includegraphics[width=0.46\textwidth]{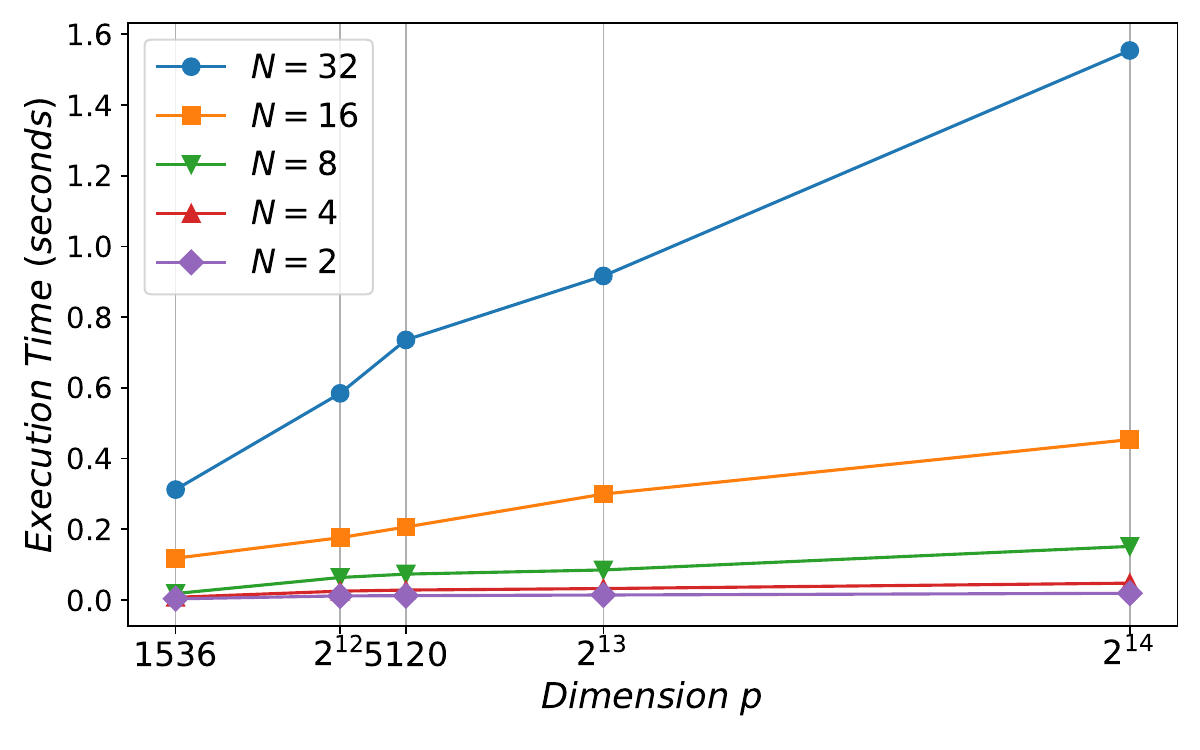}
    \caption{Average running time of Algorithm 1 with varying problem dimension $p$ and number of steering vectors $N$. Lower execution time is better.}
    \label{fig:runtime-dimensions}
    \vspace{-5mm}
\end{figure}

We analyze the execution time of Algorithm~\ref{alg:algorithm1} for different hidden vector dimensions $p$ and different numbers of sequences $N$. We generate synthetic activations with $p$ varying from $1536$ up to $2^{14} = 16384$, which corresponds to the hidden size of large-scale models such as LLaMA-3.1-405B. Figure~\ref{fig:runtime-dimensions} reports the average solution time of Algorithm 1 with $K = 20$ over 30 independent runs. The execution time for $N = 32$ remains under 1.8 seconds even for the largest dimension $16384$, showing that our SPREAD method remains highly efficient. These results demonstrate the scalability and practical efficiency of our algorithm for inference-time language model intervention.

\section{Conclusions}\label{sec:conclusion}
We introduced SPREAD, a novel unsupervised activation steering framework at test time designed to improve the reasoning diversity of language models. By addressing the well-known challenge of low output variance in best-of-$N$ sampling, SPREAD intervenes meaningful diversity into generation trajectories. Our key innovation lies in reformulating the activation steering problem as a Riemannian optimization over the product of spheres, maximizing the log-determinant volume spanned by intervened representations. This principled formulation enables us to efficiently compute diverse steering directions using Riemannian block-coordinate descent. Empirical evaluations on mathematical reasoning benchmarks such as AIME24, MATH, and OlympiadBench validate the effectiveness of our approach, as SPREAD significantly improves both the diversity and accuracy of generated solutions, outperforming temperature-sampling techniques. These results demonstrate the potential of geometric test-time intervention methods to enhance reasoning in language models.

\bibliography{arxiv.bbl}

\begin{thebibliography}{39}
\providecommand{\natexlab}[1]{#1}

\bibitem[{Ackerman(2024)}]{ackerman2024representation}
Ackerman, C.~M. 2024.
\newblock Representation tuning.
\newblock arXiv:2409.06927.

\bibitem[{Ackley, Hinton, and Sejnowski(1985)}]{ref:ackley1985learning}
Ackley, D.~H.; Hinton, G.~E.; and Sejnowski, T.~J. 1985.
\newblock A learning algorithm for Boltzmann machines.
\newblock \emph{Cognitive science}, 9(1): 147--169.

\bibitem[{{AI-MO}(2024)}]{ref:aimo2024aime}
{AI-MO}. 2024.
\newblock AIME24.
\newblock \url{https://huggingface.co/datasets/AI-MO/aimo-validation-aime}.

\bibitem[{Bestgen(2024)}]{ref:bestgen2024measuring}
Bestgen, Y. 2024.
\newblock Measuring lexical diversity in texts: The twofold length problem.
\newblock \emph{Language Learning}, 74(3): 638--671.

\bibitem[{Boumal(2023)}]{ref:boumal2023introduction}
Boumal, N. 2023.
\newblock \emph{An {I}ntroduction to {O}ptimization on {S}mooth {M}anifolds}.
\newblock Cambridge University Press.

\bibitem[{Chen et~al.(2021)Chen, Tworek, Jun, Yuan, de~Oliveira~Pinto, Kaplan, Edwards, Burda, Joseph, Brockman, Ray, Puri, Krueger, Petrov, Khlaaf, Sastry, Mishkin, Chan, Gray, Ryder, Pavlov, Power, Kaiser, Bavarian, Winter, Tillet, Such, Cummings, Plappert, Chantzis, Barnes, Herbert-Voss, Guss, Nichol, Paino, Tezak, Tang, Babuschkin, Balaji, Jain, Saunders, Hesse, Carr, Leike, Achiam, Misra, Morikawa, Radford, Knight, Brundage, Murati, Mayer, Welinder, McGrew, Amodei, McCandlish, Sutskever, and Zaremba}]{ref:chen2021evaluating}
Chen, M.; Tworek, J.; Jun, H.; Yuan, Q.; de~Oliveira~Pinto, H.~P.; Kaplan, J.; Edwards, H.; Burda, Y.; Joseph, N.; Brockman, G.; Ray, A.; Puri, R.; Krueger, G.; Petrov, M.; Khlaaf, H.; Sastry, G.; Mishkin, P.; Chan, B.; Gray, S.; Ryder, N.; Pavlov, M.; Power, A.; Kaiser, L.; Bavarian, M.; Winter, C.; Tillet, P.; Such, F.~P.; Cummings, D.; Plappert, M.; Chantzis, F.; Barnes, E.; Herbert-Voss, A.; Guss, W.~H.; Nichol, A.; Paino, A.; Tezak, N.; Tang, J.; Babuschkin, I.; Balaji, S.; Jain, S.; Saunders, W.; Hesse, C.; Carr, A.~N.; Leike, J.; Achiam, J.; Misra, V.; Morikawa, E.; Radford, A.; Knight, M.; Brundage, M.; Murati, M.; Mayer, K.; Welinder, P.; McGrew, B.; Amodei, D.; McCandlish, S.; Sutskever, I.; and Zaremba, W. 2021.
\newblock Evaluating large language models trained on code.
\newblock arXiv:2107.03374.

\bibitem[{Chung et~al.(2025)Chung, Hsiao, Huang, Cho, Lin, Ziwei, and Chen}]{ref:chung2025revisiting}
Chung, H.-L.; Hsiao, T.-Y.; Huang, H.-Y.; Cho, C.; Lin, J.-R.; Ziwei, Z.; and Chen, Y.-N. 2025.
\newblock Revisiting test-time scaling: A Survey and a diversity-aware method for efficient reasoning.
\newblock arXiv:2506.04611.

\bibitem[{Dang et~al.(2025)Dang, Baek, Wen, Kolter, and Raghunathan}]{ref:dang2025weight}
Dang, X.; Baek, C.; Wen, K.; Kolter, J.~Z.; and Raghunathan, A. 2025.
\newblock Weight ensembling improves reasoning in language models.
\newblock In \emph{Second Conference on Language Modeling}.

\bibitem[{Fan, Lewis, and Dauphin(2018)}]{ref:fan2018hierarchical}
Fan, A.; Lewis, M.; and Dauphin, Y. 2018.
\newblock Hierarchical neural story generation.
\newblock In Gurevych, I.; and Miyao, Y., eds., \emph{Proceedings of the 56th Annual Meeting of the Association for Computational Linguistics (Volume 1: Long Papers)}, 889--898. Melbourne, Australia: Association for Computational Linguistics.

\bibitem[{Gutman and Ho-Nguyen(2023)}]{ref:gutman2023coordinate}
Gutman, D.~H.; and Ho-Nguyen, N. 2023.
\newblock Coordinate descent without coordinates: Tangent subspace descent on Riemannian manifolds.
\newblock \emph{Mathematics of Operations Research}, 48(1): 127--159.

\bibitem[{Han, Kim, and Chang(2022)}]{ref:han2022measuring}
Han, S.; Kim, B.; and Chang, B. 2022.
\newblock Measuring and improving semantic diversity of dialogue generation.
\newblock In Goldberg, Y.; Kozareva, Z.; and Zhang, Y., eds., \emph{Findings of the Association for Computational Linguistics: EMNLP 2022}, 934--950. Association for Computational Linguistics.

\bibitem[{He et~al.(2024)He, Luo, Bai, Hu, Thai, Shen, Hu, Han, Huang, Zhang, Liu, Qi, Liu, and Sun}]{ref:he2024olympiadbench}
He, C.; Luo, R.; Bai, Y.; Hu, S.; Thai, Z.~L.; Shen, J.; Hu, J.; Han, X.; Huang, Y.; Zhang, Y.; Liu, J.; Qi, L.; Liu, Z.; and Sun, M. 2024.
\newblock OlympiadBench: A challenging benchmark for promoting AGI with Olympiad-Level bilingual multimodal scientific problems.
\newblock arXiv:2402.14008.

\bibitem[{Holtzman et~al.(2020)Holtzman, Buys, Du, Forbes, and Choi}]{ref:holtzman2020curious}
Holtzman, A.; Buys, J.; Du, L.; Forbes, M.; and Choi, Y. 2020.
\newblock The curious case of neural text degeneration.
\newblock In \emph{International Conference on Learning Representations}.

\bibitem[{Jin et~al.(2021)Jin, Netrapalli, Ge, Kakade, and Jordan}]{ref:jin2021nonconvex}
Jin, C.; Netrapalli, P.; Ge, R.; Kakade, S.~M.; and Jordan, M.~I. 2021.
\newblock On nonconvex optimization for machine learning: Gradients, stochasticity, and saddle points.
\newblock \emph{Journal of the ACM (JACM)}, 68(2): 1--29.

\bibitem[{Kulesza and Taskar(2012)}]{ref:kulesza2012determinantal}
Kulesza, A.; and Taskar, B. 2012.
\newblock Determinantal point processes for machine learning.
\newblock \emph{Foundations and Trends{\textregistered} in Machine Learning}, 5(2--3): 123--286.

\bibitem[{Laakom et~al.(2023)Laakom, Raitoharju, Iosifidis, and Gabbouj}]{ref:laakom2023wldreg}
Laakom, F.; Raitoharju, J.; Iosifidis, A.; and Gabbouj, M. 2023.
\newblock WLD-Reg: A data-dependent within-layer diversity regularizer.
\newblock \emph{Proceedings of the AAAI Conference on Artificial Intelligence}, 37(7): 8421--8429.

\bibitem[{Leviathan, Kalman, and Matias(2023)}]{ref:leviathan2023fast}
Leviathan, Y.; Kalman, M.; and Matias, Y. 2023.
\newblock Fast inference from transformers via speculative decoding.
\newblock In \emph{Proceedings of the 40th International Conference on Machine Learning}.

\bibitem[{Lewkowycz et~al.(2022)Lewkowycz, Andreassen, Dohan, Dyer, Michalewski, Ramasesh, Slone, Anil, Schlag, Gutman-Solo, Wu, Neyshabur, Gur-Ari, and Misra}]{ref:lewkowycz2022solving}
Lewkowycz, A.; Andreassen, A.; Dohan, D.; Dyer, E.; Michalewski, H.; Ramasesh, V.; Slone, A.; Anil, C.; Schlag, I.; Gutman-Solo, T.; Wu, Y.; Neyshabur, B.; Gur-Ari, G.; and Misra, V. 2022.
\newblock Solving quantitative reasoning problems with language models.
\newblock In \emph{Proceedings of the 36th International Conference on Neural Information Processing Systems}.

\bibitem[{Li et~al.(2023{\natexlab{a}})Li, Patel, Vi\'{e}gas, Pfister, and Wattenberg}]{ref:Li2023advances}
Li, K.; Patel, O.; Vi\'{e}gas, F.; Pfister, H.; and Wattenberg, M. 2023{\natexlab{a}}.
\newblock Inference-time intervention: Eliciting truthful answers from a language model.
\newblock In Oh, A.; Naumann, T.; Globerson, A.; Saenko, K.; Hardt, M.; and Levine, S., eds., \emph{Advances in Neural Information Processing Systems}, volume~36, 41451--41530.

\bibitem[{Li et~al.(2023{\natexlab{b}})Li, Lin, Zhang, Fu, Chen, Lou, and Chen}]{ref:li2023making}
Li, Y.; Lin, Z.; Zhang, S.; Fu, Q.; Chen, B.; Lou, J.-G.; and Chen, W. 2023{\natexlab{b}}.
\newblock Making language models better reasoners with step-aware verifier.
\newblock In \emph{Proceedings of the 61st Annual Meeting of the Association for Computational Linguistics (Volume 1: Long Papers)}, 5315--5333.

\bibitem[{Lightman et~al.(2023)Lightman, Kosaraju, Burda, Edwards, Baker, Lee, Leike, Schulman, Sutskever, and Cobbe}]{ref:lightman2023let}
Lightman, H.; Kosaraju, V.; Burda, Y.; Edwards, H.; Baker, B.; Lee, T.; Leike, J.; Schulman, J.; Sutskever, I.; and Cobbe, K. 2023.
\newblock Let's verify step by step.
\newblock In \emph{The Twelfth International Conference on Learning Representations}.

\bibitem[{Lindsey et~al.(2025)Lindsey, Gurnee, Ameisen, Chen, Pearce, Turner, Citro, Abrahams, Carter, Hosmer, Marcus, Sklar, Templeton, Bricken, McDougall, Cunningham, Henighan, Jermyn, Jones, Persic, Qi, Thompson, Zimmerman, Rivoire, Conerly, Olah, and Batson}]{ref:jack2025on}
Lindsey, J.; Gurnee, W.; Ameisen, E.; Chen, B.; Pearce, A.; Turner, N.~L.; Citro, C.; Abrahams, D.; Carter, S.; Hosmer, B.; Marcus, J.; Sklar, M.; Templeton, A.; Bricken, T.; McDougall, C.; Cunningham, H.; Henighan, T.; Jermyn, A.; Jones, A.; Persic, A.; Qi, Z.; Thompson, T.~B.; Zimmerman, S.; Rivoire, K.; Conerly, T.; Olah, C.; and Batson, J. 2025.
\newblock On the Biology of a Large Language Model.
\newblock \url{https://transformer-circuits.pub/2025/attribution-graphs/biology.html}.
\newblock Accessed: 2025-08-02.

\bibitem[{Meister et~al.(2023)Meister, Pimentel, Wiher, and Cotterell}]{ref:meister2025locally}
Meister, C.; Pimentel, T.; Wiher, G.; and Cotterell, R. 2023.
\newblock Locally typical sampling.
\newblock \emph{Transactions of the Association for Computational Linguistics}, 11: 102--121.

\bibitem[{Nagarajan et~al.(2025)Nagarajan, Wu, Ding, and Raghunathan}]{ref:nagarajan2025roll}
Nagarajan, V.; Wu, C.~H.; Ding, C.; and Raghunathan, A. 2025.
\newblock Roll the dice \& look before you leap: Going beyond the creative limits of next-token prediction.
\newblock In \emph{Forty-second International Conference on Machine Learning}.

\bibitem[{Naik et~al.(2023)Naik, Chandrasekaran, Yüksekgönül, Palangi, and Nushi}]{ref:naik2024diversity}
Naik, R.; Chandrasekaran, V.; Yüksekgönül, M.; Palangi, H.; and Nushi, B. 2023.
\newblock Diversity of thought improves reasoning abilities of large language models.
\newblock \emph{CoRR}, abs/2310.07088.

\bibitem[{Ni et~al.(2023)Ni, Iyer, Radev, Stoyanov, Yih, Wang, and Lin}]{ref:ni2023lever}
Ni, A.; Iyer, S.; Radev, D.; Stoyanov, V.; Yih, W.-t.; Wang, S.; and Lin, X.~V. 2023.
\newblock Lever: Learning to verify language-to-code generation with execution.
\newblock In \emph{International Conference on Machine Learning}, 26106--26128. PMLR.

\bibitem[{Petersen, Pedersen et~al.(2008)}]{ref:petersen2008matrix}
Petersen, K.~B.; Pedersen, M.~S.; et~al. 2008.
\newblock The Matrix Cookbook.
\newblock \emph{Technical University of Denmark}, 7(15): 510.

\bibitem[{Pyle(1962)}]{ref:pyle1962non}
Pyle, H.~R. 1962.
\newblock Non-square determinants and multilinear vectors.
\newblock \emph{Mathematics Magazine}, 35(2): 65--69.

\bibitem[{Stolfo et~al.(2025)Stolfo, Balachandran, Yousefi, Horvitz, and Nushi}]{ref:stolfo2025improving}
Stolfo, A.; Balachandran, V.; Yousefi, S.; Horvitz, E.; and Nushi, B. 2025.
\newblock Improving instruction-following in language models through activation steering.
\newblock In \emph{The Thirteenth International Conference on Learning Representations}.

\bibitem[{Su and Collier(2023)}]{ref:su2023contrastive}
Su, Y.; and Collier, N. 2023.
\newblock Contrastive search is what you need for neural text Generation.
\newblock \emph{Transactions on Machine Learning Research}.

\bibitem[{Turner et~al.(2023)Turner, Thiergart, Leech, Udell, Vazquez, Mini, and MacDiarmid}]{ref:turner2023steering}
Turner, A.~M.; Thiergart, L.; Leech, G.; Udell, D.; Vazquez, J.~J.; Mini, U.; and MacDiarmid, M. 2023.
\newblock Steering language models with activation engineering.
\newblock \emph{arXiv preprint arXiv:2308.10248}.

\bibitem[{Turner et~al.(2024)Turner, Thiergart, Leech, Udell, Vazquez, Mini, and MacDiarmid}]{turner2024steering}
Turner, A.~M.; Thiergart, L.; Leech, G.; Udell, D.; Vazquez, J.~J.; Mini, U.; and MacDiarmid, M. 2024.
\newblock Steering language models with activation engineering.
\newblock arXiv:2308.10248.

\bibitem[{Vaswani et~al.(2017)Vaswani, Shazeer, Parmar, Uszkoreit, Jones, Gomez, Kaiser, and Polosukhin}]{ref:vaswani2017attention}
Vaswani, A.; Shazeer, N.; Parmar, N.; Uszkoreit, J.; Jones, L.; Gomez, A.~N.; Kaiser, {\L}.; and Polosukhin, I. 2017.
\newblock Attention is all you need.
\newblock \emph{Advances in neural information processing systems}, 30.

\bibitem[{Vijayakumar et~al.(2016)Vijayakumar, Cogswell, Selvaraju, Sun, Lee, Crandall, and Batra}]{ref:vijayakumar2018diverse}
Vijayakumar, A.~K.; Cogswell, M.; Selvaraju, R.~R.; Sun, Q.; Lee, S.; Crandall, D.~J.; and Batra, D. 2016.
\newblock Diverse beam search: decoding diverse solutions from neural sequence models.
\newblock \emph{CoRR}.

\bibitem[{Wang et~al.(2025)Wang, Jiao, Zhu, Chen, He, Chu, Gao, Wang, and Ma}]{ref:Wang2025adaptive}
Wang, T.; Jiao, X.; Zhu, Y.; Chen, Z.; He, Y.; Chu, X.; Gao, J.; Wang, Y.; and Ma, L. 2025.
\newblock Adaptive activation steering: A tuning-free {LLM} truthfulness improvement method for diverse hallucinations categories.
\newblock In \emph{Proceedings of the ACM on Web Conference 2025}, 2562–2578. Association for Computing Machinery.

\bibitem[{Wei et~al.(2022)Wei, Wang, Schuurmans, Bosma, brian ichter, Xia, Chi, Le, and Zhou}]{ref:wei2023chain}
Wei, J.; Wang, X.; Schuurmans, D.; Bosma, M.; brian ichter; Xia, F.; Chi, E.~H.; Le, Q.~V.; and Zhou, D. 2022.
\newblock Chain of thought prompting elicits reasoning in large language models.
\newblock In Oh, A.~H.; Agarwal, A.; Belgrave, D.; and Cho, K., eds., \emph{Advances in Neural Information Processing Systems}.

\bibitem[{Yun et~al.(2025)Yun, An, Wang, Peng, and Shang}]{ref:yun2025the}
Yun, L.; An, C.; Wang, Z.; Peng, L.; and Shang, J. 2025.
\newblock The price of format: Diversity collapse in {LLM}s.
\newblock arXiv:2505.18949.

\bibitem[{Zhang et~al.(2025)Zhang, Wang, Li, Ao, and He}]{ref:zhang2025controlling}
Zhang, H.; Wang, X.; Li, C.; Ao, X.; and He, Q. 2025.
\newblock Controlling large language models through concept activation vectors.
\newblock \emph{Proceedings of the AAAI Conference on Artificial Intelligence}, 39(24): 25851--25859.

\bibitem[{Zhang et~al.(2024)Zhang, Wang, Li, Shou, Chen, Chen, and Mehrotra}]{ref:Zhang2024Draft}
Zhang, J.; Wang, J.; Li, H.; Shou, L.; Chen, K.; Chen, G.; and Mehrotra, S. 2024.
\newblock Draft\& Verify: Lossless large language model acceleration via self-speculative decoding.
\newblock In \emph{Proceedings of the 62nd Annual Meeting of the Association for Computational Linguistics (Volume 1: Long Papers)}, 11263–11282. Association for Computational Linguistics.

\end{thebibliography}

\newpage
\appendix
\onecolumn

This is the appendix for the paper ``Test-time Diverse Reasoning by Riemannian Activation Steering". Appendix~\ref{app:proof} collects the proofs of all theoretical results in the main paper.  Appendix~\ref{app:Experimental Details} provides further details about the experimental settings and additional results.

\section{Proofs of Main Results}\label{app:proof}
This section contains the proofs of all technical results presented in the main paper. For a matrix $A\in \R^{p\times N}$, $\|A\|_{2}$ denotes its spectral norm, i.e., the square root of the largest eigenvalue of the matrix $A^{\top}A$.

\begin{proof}[Proof of Proposition~\ref{prop:obj}]
The volume of the parallelepiped is computed from the determinant of the Gram matrix~:
\[
    \mathrm{Volume}( \mc P (\{ h_i + v_i\}_{i \in \mathbb I} ) )^2 = \det ( (H+V)_{\mathbb I}^\top (H+V)_{\mathbb I} )
\]
where $(H+V)_{\mathbb I}$ is the submatrix of $(H+V)$ restricted to the columns indexed by $\mathbb I$~\cite{ref:pyle1962non}. By~\citet[theorem~2.1]{ref:kulesza2012determinantal}, we find
\[
    \sum_{\mathbb I \subseteq 2^{[N]}}~\det ( (H+V)_{\mathbb I}^\top (H+V)_{\mathbb I} ) = \det (I + (H + V)^\top (H + V))
\]
Consequently, problem~\eqref{eq:volume} is equivalent to 
\begin{equation} \label{eq:maxdet}
    \max_{\forall i: \| v_i \|_2^2 \le \alpha_i}~\det  [ I +   (H+V)^\top (H + V) ].
\end{equation}
Since the logarithm function is a monotonically increasing function, we obtain the desired result. 
\end{proof}

    \begin{proof}[Proof of Proposition~\ref{prop:equality_refor}]
        Introducing dual variables $\lambda_{i}\geq0$ for each of the $N$ constraints of problem~\eqref{eq:minlogdet}, then the Lagrangian function for problem~\eqref{eq:minlogdet} is
        \[
        \mathcal{L}(V, \lambda) = \ell(V) + \sum_{i}\lambda_{i}(\| v_i \|_2^2 - \alpha_i ).
        \]
        For each $i=1,\dots,N$, we define the constraint function $c_{i}(V) = \|v_{i} \|^{2}-\alpha_{i}$. We have the gradient \begin{equation}\label{eq:grad_cons}
        \nabla_{i} c_i(V)= (0,\ldots,0,\underbrace{2v_i}_{i\mathrm{-th~block}},0,\ldots,0).
        \end{equation}
        Let  $V\opt=[v_{1}\opt, \dots, v_{N}\opt] $ be an optimal solution of problem~\eqref{eq:minlogdet}. Consider the active set at $V\opt$ 
        \[
        \mathcal{A} = \{i: c_{i}(V\opt)=\alpha_{i}\} = \{i:\| v_{i}\|^{2} = \alpha_{i}\}.
        \]
        Since $\alpha_{i}>0$, from \eqref{eq:grad_cons} we know that $\{ \nabla c_{i}(V\opt), i \in \mathcal{A}\}$ are linearly independent. Thus, the KKT point $(V\opt, \lambda\opt)$ of problem~\eqref{eq:minlogdet} satisfies
        \begin{subequations}
            \begin{align}
            \nabla_{i} \mathcal{L}(V\opt, \lambda\opt) & = 0 \quad \forall i \label{eq:kkt_sta}\\
            \| v_{i}\opt \|_2^2 -\alpha_i \le 0, \quad \lambda_{i}\opt &\geq 0 \quad \forall i \\
            \lambda_{i}\opt (\| v_{i}\opt \|_2^2 -\alpha_i ) & = 0 \quad \forall i. \label{eq:kkt_slack}
        \end{align}
        \end{subequations}
      The gradient of $\ell(V) $ with respect to each vector $v_i$ is computed as 
        \[
        \nabla_{i} \ell(V) = \frac{\dd \ell}{\dd V} e_{i} = -2 (H + V)[I+(H + V)^{\top}(H + V)]^{-1} e_{i}.
        \]
        Thus, the gradient of the Lagrangian with respect to each vector $v_i$ is
        \[
        \nabla_{i} \mathcal{L}(V\opt, \lambda\opt) = -2 PQ e_{i} - 2\lambda_{i}\opt v_{i},
        \]
        where $P=H + V\opt$ and $Q = (I+P^{\top}P)^{-1}$.
        
       In the following, we prove that any KKT point should satisfy $\lambda\opt > 0$, and hence by~\eqref{eq:kkt_slack}, it implies that $\| v_{i}\opt \|_2^2 -\alpha_i = 0$ for all $i$. To this end, suppose without any loss of generality that $\lambda_{1}\opt = 0$. Then the stationary condition~\eqref{eq:kkt_sta} gives
        \[
        PQe_{1}=0,
        \]
        which means that
        \begin{equation}\label{eq:Pq1}
             Pq_{1} = 0,
        \end{equation}
        where $q_{1} $ is the first column of $Q$. From the definition of the inverse matrix, it holds that
        \[
        (I+P^{\top}P) Q= I.
        \]
        Equating the first column of both sides gives
        \begin{equation}\label{eq:Pq1_2}
            (I+P^{\top}P)q_{1}=e_{1}.
        \end{equation}
        Equation~\eqref{eq:Pq1} implies that $P^\top P q_1 = 0$, thus, we have $q_{1}=e_{1}$. Consequentially, equation~\eqref{eq:Pq1} implies that $Pe_{1}=0$, which means that the first column of $P=H+V\opt$ is $0$, that is, $h_{1}+v_{1}\opt=0$. Now we construct a new matrix $\tilde{V}$ as $\tilde{V} = [\tilde{v}, v_{2}\opt, \dots,v_{N}\opt]$, where $\tilde{v}$ is any vector with norm $\sqrt{\alpha_{1}}$ but not equal to $-h_{1}$. One can see that $ \tilde{V}$ is feasible for problem~\eqref{eq:minlogdet}. Using the Sylvester's determinant identity $\det(I+AB)=\det(I+BA)$, we have
        \begin{align*}
            \ell(\tilde{V}) & =-\log\det[I + (H+\tilde{V})^{\top}(H+\tilde{V})] \\
            & = -\log\det[I + (H+\tilde{V})(H+\tilde{V})^{\top}] \\
            & = -\log\det[I + \sum_{i=2}^{N}(h_{i}+v_{i}\opt)(h_{i}+v_{i}\opt)^{\top} + (h_{1}+\tilde{v})(h_{1}+\tilde{v})^{\top}]\\
            & = -\log\det[I + PP^{\top} + (h_{1}+\tilde{v})(h_{1}+\tilde{v})^{\top}]\quad \quad (h_{1}+v_{1}\opt=0)\\
            & = -\log\det(I + PP^{\top}) -\log\det[I +(I + PP^{\top})^{-1}(h_{1}+\tilde{v})(h_{1}+\tilde{v})^{\top}] \\
            & = \ell(V\opt) -\log[1+(h_{1}+\tilde{v})^{\top}(I + PP^{\top})^{-1}(h_{1}+\tilde{v})].
        \end{align*}
        Since $I+PP^{\top}$ is positive definite and $\tilde{v}\neq -h_{1}$, we have $(h_{1}+\tilde{v})^{\top}(I + PP^{\top})^{-1}(h_{1}+\tilde{v})>0$, which implies that
        \[
        \ell(\tilde{V}) = \ell(V\opt) -\log[1+(h_{1}+\tilde{v})^{\top}(I + PP^{\top})^{-1}(h_{1}+\tilde{v})] < \ell(V\opt) .
        \]
        Hence, we get a contradiction about the primal-dual optimality of $(V\opt, \lambda\opt)$. Thus we can conclude that $\lambda\opt_{i}>0$ for all $i=1,\dots,N$. Combined with condition~\eqref{eq:kkt_slack}, it holds that $\|v_{i} \opt\|^{2}=\alpha_{i}$ for all $i$. The proof is completed.
    \end{proof}

\begin{proof}[Proof of Lemma~\ref{lemma:Riemannian gradient}]
The Euclidean gradient of $\ell$ is given by~\cite{ref:petersen2008matrix}
\begin{equation}\label{eq:e-grad_V}
    \nabla \ell(V) = -2 (H + V)M^{-1} \in \R^{p \times N} ,
\end{equation}
where $M =I+(H+V)^{\top}(H+V) \in \R^{N \times N}$. For a column vector $v_i$, the gradient of $\ell$ in this vector is the $k$-th column of the matrix $-2 (H + V)M^{-1}$, i.e., 
\begin{equation}\label{eq:e-grad}
    g_i \Let \nabla_{i} \ell(V) = -2 ((H + V) M^{-1})_i \in \R^{p}.
\end{equation}
The tangent space of the scaled sphere $\mc M_{i}$ is given by
 \[
        T_{v_i} \mc M_i = \{ z \in \R^{p} : z^\top v_i = 0 \},
\]
whose orthogonal projection is 
\[
\Proj_{T_{v_{i}}\mc M_{i}}(u)=(I-\frac{1}{\alpha_{i}}v_{i}v_{i}^{\top})u, \quad u \in \R^{p}.
\]
Therefore, the Riemannian gradient of $\ell$ at any point $v_{i} \in \mc M_{i}$ is given by
\begin{equation} 
        \grad_{i} \ell(V) =\Proj_{T_{v_{i}}\mc M_{i}}(g_{i})  = g_i - \frac{1}{\alpha_i} g_i^\top v_i v_i \in \R^{p}.
    \end{equation}
Since $\mc M$ is a product manifold, 
according to \citet[Exercise 3.67]{ref:boumal2023introduction}, the Riemannian gradient of $\ell$ at any point $V \in \mc M$ is 
\[
\grad \ell(V) = [\grad_{1} \ell(V),\dots,\grad_{N} \ell(V)].
\]
This completes the proof.
\end{proof}

The proof of Proposition~\ref{prop:L-smooth} relies on the following Lemma~\ref{lem:bound_hessian} and Proposition~\ref{prop:formula_Hess}.

 \begin{lemma}[Upper bound of Hessian]\label{lem:bound_hessian}
    Let $\bar{\alpha} \Let \sum_{i=1}^{N} \alpha_i$. For any $V \in \mc M$, it holds that
    \[
   \sup_{\|U\|_F=1} \|\nabla^2 \ell(V)[U]\|_F  \leq 2+4 \left(\|H \|_{F} + \sqrt{\bar{\alpha}}\right)^{2}.
    \]
\end{lemma}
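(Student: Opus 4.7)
The plan is to derive a closed-form expression for the Hessian action $\nabla^{2}\ell(V)[U]$ on a direction $U\in\R^{p\times N}$, and then bound its Frobenius norm using submultiplicativity of the spectral/Frobenius norms together with the natural upper bounds $\|M^{-1}\|_{2}\leq 1$ and $\|H+V\|_{F}\leq \|H\|_{F}+\sqrt{\bar{\alpha}}$.

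First, I will set $P = H+V$ and $M = I + P^{\top}P$, so that by Lemma~\ref{lemma:Riemannian gradient} (or rather its Euclidean precursor, see~\eqref{eq:e-grad_V}) the Euclidean gradient is $\nabla\ell(V) = -2PM^{-1}$. To obtain the Hessian, I will differentiate $\nabla\ell(V+tU)$ at $t=0$. Using $\frac{d}{dt}\big|_{t=0}(P+tU) = U$ and $\frac{d}{dt}\big|_{t=0}M(t) = U^{\top}P + P^{\top}U$, together with the identity $\frac{d}{dt}M(t)^{-1}=-M^{-1}M'(t)M^{-1}$, the product rule gives
\begin{equation}\label{eq:hess_action}
  \nabla^{2}\ell(V)[U] = -2 U M^{-1} + 2 P M^{-1}(U^{\top}P + P^{\top}U) M^{-1}.
\end{equation}

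Next, I will bound each of the two terms in \eqref{eq:hess_action} in Frobenius norm, assuming $\|U\|_{F}=1$. Since $M \succeq I$, we have $\|M^{-1}\|_{2}\leq 1$. Writing $R \Let \|H\|_{F}+\sqrt{\bar{\alpha}}$, the triangle inequality and $\|V\|_{F}=\sqrt{\bar{\alpha}}$ give $\|P\|_{2}\leq \|P\|_{F}\leq R$. Applying the inequality $\|AB\|_{F}\leq \min(\|A\|_{2}\|B\|_{F},\|A\|_{F}\|B\|_{2})$ yields $\|{-2}UM^{-1}\|_{F}\leq 2\|U\|_{F}\|M^{-1}\|_{2}\leq 2$. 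For the second term, I will bound $\|U^{\top}P+P^{\top}U\|_{F}\leq 2\|U\|_{F}\|P\|_{2}\leq 2R$, then $\|M^{-1}(U^{\top}P+P^{\top}U)\|_{F}\leq 2R$, and finally $\|2PM^{-1}(U^{\top}P+P^{\top}U)M^{-1}\|_{F}\leq 2\|P\|_{2}\cdot 2R\leq 4R^{2}$. Combining these yields $\|\nabla^{2}\ell(V)[U]\|_{F}\leq 2+4R^{2}$, which is the claimed bound after taking the supremum over $\|U\|_{F}=1$.

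The only mildly delicate step is \eqref{eq:hess_action}: one must correctly apply the matrix-inverse derivative identity to $M(t)^{-1}$ and keep the left/right placement of $P$, $M^{-1}$, and the symmetric increment $U^{\top}P+P^{\top}U$ consistent. Once this formula is in hand, the norm estimates are routine applications of submultiplicativity, and the choice between the spectral-norm bound $\|P\|_{2}$ (applied on the ``outer'' factor) and the Frobenius bound $\|P\|_{F}$ (to control the scalar $R$) is dictated simply by which side of each product is unit-norm.
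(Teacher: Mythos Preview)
Your proof is correct and essentially identical to the paper's: you derive the same Hessian formula $\nabla^{2}\ell(V)[U]=-2UM^{-1}+2PM^{-1}(U^{\top}P+P^{\top}U)M^{-1}$ and bound the two terms via $\|M^{-1}\|_{2}\le 1$ and $\|P\|_{F}\le \|H\|_{F}+\sqrt{\bar\alpha}$, exactly as in the paper. The only cosmetic difference is that the paper keeps the middle factor in spectral norm ($\|U^{\top}P+P^{\top}U\|_{2}$) while you use its Frobenius norm, but both routes give the same $2+4R^{2}$ bound.
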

\begin{proof}[Proof of Lemma~\ref{lem:bound_hessian}]
For any $V \in \mc M$ and $U \in \R^{p \times N}$ with $\|U\|_{F}=1$, the Hessian of $\ell$ at $V$ evaluated in the direction $U$ is~\cite{ref:petersen2008matrix}
\begin{equation*}
    \nabla^2 \ell(V)[U] = -2U M^{-1} + 2(H+V) M^{-1} (U^{\top}(H+V) + (H+V)^{\top} U) M^{-1},
\end{equation*}
where $M = I + (H + V)^\top (H + V)$. 
Thus we have
\begin{equation}\label{eq:hessian_bound_1}
  \|  \nabla^2 \ell(V)[U] \|_{F}  \leq  2 \| U M^{-1}\|_{F} + 2\|(H+V) M^{-1} (U^{\top}(H+V) + (H+V)^{\top} U) M^{-1} \|_{F}.
\end{equation}

Since $M=I + (H+V)^{\top}(H+V)$ and $(H+V)^{\top}(H+V)$ is always positive semidefinite, the smallest eigenvalue of $M$ must be equal or greater than $1$. Thus, the largest eigenvalue of $M^{-1}$ must be no greater than $1$, which means $\| M^{-1}\|_{2}\leq 1$. Then, we have
\begin{equation}\label{eq:upper-1}
    \|UM^{-1} \|_{F} \leq   \|U \|_{F}\| M^{-1}\|_{2}\leq \|U \|_{F}=1.
\end{equation}

For the term $\|(H+V) M^{-1} (U^{\top}(H+V) + (H+V)^{\top} U) M^{-1}\|_{F}$, it holds that
\begin{align} \notag & \|(H+V) M^{-1} (U^{\top}(H+V) + (H+V)^{\top} U) M^{-1}\|_{F} \\
  \nonumber   \leq &\|H+V \|_{F} \|M^{-1} \|_{2} \| U^{\top}(H+V) + (H+V)^{\top} U\|_{2} \| M^{-1}\|_{2} \\
  \nonumber   \leq & \|H+V \|_{F} \| U^{\top}(H+V) + (H+V)^{\top} U\|_{2} \\
  \nonumber   \leq & \|H+V \|_{F} \left( \| U^{\top}(H+V)\|_{2}+ \| (H+V)^{\top} U\|_{2} \right) \\
  \nonumber   \leq & \|H+V \|_{F} \times 2 \| U\|_{2} \|H+V \|_{2}  \\
    \leq & 2 \|H+V \|_{F}^{2} & (\| \cdot\|_{2}\leq \|\cdot \|_{F}).\label{eq:hessian_bound_2}
\end{align}
Defining $\bar{\alpha}=\sum_i \alpha_i$, according to the triangle inequality, we have
\[
\|H+V \|_{F} \leq \|H\|_{F} + \|V\|_{F} = \| H\|_{F} + \sqrt{\bar{\alpha}},
\]
which, together with~\eqref{eq:hessian_bound_2}, implies that
\begin{equation}\label{eq:hessian_bound_3}
   \|(H+V) M^{-1} (U^{\top}(H+V) + (H+V)^{\top} U) M^{-1}\|_{F} \leq  2(\| H\|_{F} + \sqrt{\bar{\alpha}})^{2}.
\end{equation}
Combining \eqref{eq:hessian_bound_1}, \eqref{eq:upper-1} and \eqref{eq:hessian_bound_3}, we have
\[
 \|  \nabla^2 \ell(V)[U] \|_{F}  \leq 2+4(\| H\|_{F} + \sqrt{\bar{\alpha}})^{2},
\]
which holds for all $U\in \R^{p \times N}$ with $\|U \|_{F}=1$. This completes the proof.
\end{proof}

\begin{proposition}
    [Formula  for $\Hess~\ell(V)$]\label{prop:formula_Hess}
    For all $V \in \mc M$ and $U \in T_{V}\mc M$, it holds that
    \begin{equation}
        \Hess~\ell(V)[U] = \Proj_{T_{V}\mc M}(\nabla^{2}\ell(V)[U] )+ U\Lambda,
    \end{equation}
where $\Lambda \in \R^{N \times N}$ is a diagonal matrix with entries $\Lambda_{ii} =- \frac{1}{\alpha_{i}}g_{i}^{\top}v_{i}$.
\end{proposition}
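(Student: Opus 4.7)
The plan is to exploit the fact that $\mc M$ is a Riemannian submanifold of the Euclidean space $\R^{p\times N}$ equipped with the Frobenius inner product, so that the Riemannian Hessian admits the clean identity
\[
\Hess~\ell(V)[U] \;=\; \Proj_{T_V\mc M}\bigl(D\,\grad~\ell(V)[U]\bigr),
\]
where $D$ denotes the Euclidean directional derivative of the vector field $\grad~\ell$ (any smooth extension; the formula is extension-independent). Since $\mc M$ is a product of spheres, both the projection and the derivative decouple into $N$ block computations, so it suffices to analyse one block and then stack.

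First, I would recall from Lemma~\ref{lemma:Riemannian gradient} that the $i$-th block of $\grad~\ell$ is
\[
\grad_i\ell(V) \;=\; g_i - \tfrac{1}{\alpha_i}(g_i^\top v_i)\,v_i, \qquad g_i = \nabla_i\ell(V),
\]
and that the projector onto $T_{v_i}\mc M_i$ is $P_{v_i}=I-\tfrac{1}{\alpha_i}v_i v_i^\top$. The main calculation is then to differentiate the expression above along the direction $U=[u_1,\dots,u_N]\in T_V\mc M$. Writing $w_i$ for the $i$-th column of $\nabla^2\ell(V)[U]$, the product rule gives
\[
D\bigl(\grad_i\ell\bigr)(V)[U] \;=\; w_i \;-\; \tfrac{1}{\alpha_i}(w_i^\top v_i + g_i^\top u_i)\,v_i \;-\; \tfrac{1}{\alpha_i}(g_i^\top v_i)\,u_i.
\]

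The key observation is what happens when we apply $P_{v_i}$: the middle term lies entirely in the normal direction $v_i$ and is annihilated, while $P_{v_i}u_i=u_i$ because $u_i\in T_{v_i}\mc M_i$. Hence
\[
P_{v_i}\bigl(D\,\grad_i\ell(V)[U]\bigr) \;=\; P_{v_i}w_i \;-\; \tfrac{1}{\alpha_i}(g_i^\top v_i)\,u_i.
\]
Stacking these $N$ identities column-wise, the first term is exactly $\Proj_{T_V\mc M}(\nabla^2\ell(V)[U])$, while the second term collects into $U\Lambda$ with $\Lambda_{ii}=-\tfrac{1}{\alpha_i}g_i^\top v_i$, as claimed.

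The only mild obstacle is bookkeeping: one must be sure that the derivative of the \emph{projector} $P_{v_i}$ (which depends on $V$) is correctly accounted for, namely through the term $-\tfrac{1}{\alpha_i}(w_i^\top v_i+g_i^\top u_i)v_i$ above, and that after applying $P_{v_i}$ none of these projector-derivative contributions survive except the one producing $U\Lambda$. I would verify this by noting that every piece containing $v_i$ as a factor is eliminated by $P_{v_i}$, isolating the clean formula. Alternatively, the same conclusion follows directly from the Weingarten identity for a submanifold with the Weingarten map on the sphere $\mc M_i$ at $v_i$ acting as $\mathfrak{W}_{v_i}(u_i,\,cv_i)=-c\,u_i$ applied to the normal component $\Proj_{N_{v_i}\mc M_i}(g_i)=\tfrac{g_i^\top v_i}{\alpha_i}v_i$, which I would cite as a sanity check.
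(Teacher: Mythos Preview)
Your proposal is correct and follows essentially the same route as the paper's own proof: both invoke $\Hess~\ell(V)[U]=\Proj_{T_V\mc M}(D\,\grad\ell(V)[U])$, decompose block-wise over the product of spheres, differentiate $g_i-\tfrac{1}{\alpha_i}(g_i^\top v_i)v_i$ by the product rule, and then observe that the $v_i$-parallel pieces are killed by the projector while the $-\tfrac{1}{\alpha_i}(g_i^\top v_i)u_i$ term survives to form $U\Lambda$. The Weingarten-map remark you add is a nice cross-check but is not used in the paper.
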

\begin{proof}[Proof of Proposition~\ref{prop:formula_Hess}]
   \citet[Corollary 5.16]{ref:boumal2023introduction} shows that
    \begin{equation}
        \Hess~\ell(V)[U] = \Proj_{T_{V}\mc M}(\mathrm{D} Y(V)[U]),
    \end{equation}
    where $Y$ is any smooth extension of $\grad\ell$ and we can pick $Y(V)=\grad\ell(V)$. Here $\mathrm{D}Y(V)$ is a linear map defined as
    \[
    \mathrm{D}Y(V)[U] = \lim_{t \to 0}\frac{Y(V+U)-Y(Y)}{t}.
    \] Since $\mc M$ is a product manifold, we can compute $\Proj_{T_{V}\mc M}$ block-wise, that is,
\begin{equation}\label{eq:decom_hess_ell}
    \Hess~\ell(V)[U] = \Proj_{T_{V}\mc M}(\mathrm{D} Y(V)[U])=[ \Proj_{T_{v_{1}} \mc M_{1} }((\mathrm{D} Y(V)[U])_{1}),\dots, \Proj_{T_{v_{N}} \mc M_{N} }((\mathrm{D} Y(V)[U])_{N})],
\end{equation}
where $(\mathrm{D} Y(V)[U])_{i}$ is the $i$-th block of $\mathrm{D} Y(V)[U]$. Noting that $Y(V)=\grad~\ell(V)=[\grad_{1}~\ell(V),\dots,\grad_{N}~\ell(V)]$, it holds that
\begin{align}
 \nonumber  ( \mathrm{D} Y(V)[U])_{i} & = \mathrm{D}\grad_{i}~\ell(V)[U]\\
    \nonumber & = \mathrm{D} (g_{i}-\frac{1}{\alpha_{i}}g_{i}^{\top}v_{i}v_{i})(V)[U] \\
   & = \mathrm{D}g_{i}(V)[U] - \frac{1}{\alpha_{i}}\mathrm{D}(g_{i}^{\top}v_{i}v_{i})(V)[U], \label{eq:decom_DY}
\end{align}
where $g_{i}$ is defined in~\eqref{eq:e-grad} and here we take it as a function of $V$.  By definition one can see that 
    \begin{equation}\label{eq:decom_DY1}
        \mathrm{D}v_{i}(V)[u_{i}]=u_{i}.
    \end{equation}  
Noting that $g_{i}(V)=\nabla_{i}\ell(V)$ is the Euclidean gradient of $\ell$ in the $i$-th block, we have
\[
[\mathrm{D}g_{1}(V)[U],\dots,\mathrm{D}g_{N}(V)[U]]=\mathrm{D}[g_{1},\dots,g_{N}](V)[U]=\mathrm{D}\nabla\ell(V)[U] = \nabla^{2}\ell(V)[U],
\]
which implies that $\mathrm{D}g_{i}(V)[U]$ is the $i$-th block of $\nabla^{2}\ell(V)[U]$, i.e., 
\begin{equation}\label{eq:decom_DY2}
    \mathrm{D}g_{i}(V)[U] = (\nabla^{2}\ell(V)[U])_{i}.
\end{equation}
Also we have
\begin{align}
 \nonumber \mathrm{D} (g_{i}^{\top}v_{i}v_{i})(V)[U] & =   \mathrm{D} (g_{i}^{\top}v_{i})(V)[U] \cdot v_{i} +  g_{i}^{\top}v_{i} \cdot \mathrm{D} (v_{i}) (V)[U]\\
\nonumber  & = \left[ \mathrm{D} (g_{i}^{\top}) (V)[U]  v_{i}+ g_{i}^{\top}\mathrm{D} (v_{i}) (V)[U] \right] \cdot v_{i}+  g_{i}^{\top}v_{i} \cdot \mathrm{D} (v_{i}) (V)[U]\\
  & =\left[ (\nabla^{2}\ell(V)[U])_{i}^{\top}v_{i} + g_{i}^{\top}u_{i}  \right]v_{i}+ g_{i}^{\top}v_{i} u_{i}.\label{eq:decom_DY3}
\end{align}
Substituting \eqref{eq:decom_DY1}, \eqref{eq:decom_DY2} and \eqref{eq:decom_DY3} into \eqref{eq:decom_DY} gives us 
\begin{align}
  \nonumber   ( \mathrm{D} Y(V)[U])_{i} & =\mathrm{D}g_{i}(V)[U] - \frac{1}{\alpha_{i}}\mathrm{D}(g_{i}^{\top}v_{i}v_{i})(V)[U]\\
    & = (\nabla^{2}\ell(V)[U])_{i} - \frac{1}{\alpha_{i}}\left[ (\nabla^{2}\ell(V)[U])_{i}^{\top}v_{i} + g_{i}^{\top}u_{i}  \right]v_{i}  - \frac{1}{\alpha_{i}}g_{i}^{\top}v_{i} u_{i}.\label{eq:decom_DY4}
\end{align}
Since $\frac{1}{\alpha_{i}}\left[ (\nabla^{2}\ell(V)[U])_{i}^{\top}v_{i} + g_{i}^{\top}u_{i}  \right]v_{i}$ is a vector along $v_{i}$, its projection into $T_{v_{i}}\mc M_{i}$ is $0$. Since $\frac{1}{\alpha_{i}}g_{i}^{\top}v_{i} u_{i}$ is a vector along $u_{i} \in T_{v_{i}}\mc M_{i}$, its projection into $T_{v_{i}}\mc M_{i}$ is itself. Hence from~\eqref{eq:decom_DY4} we have
\begin{equation}
    \Proj_{T_{v_{i}} \mc M_{i} }( ( \mathrm{D} Y(V)[U])_{i}) = \Proj_{T_{v_{i}} \mc M_{i} }( (\nabla^{2}\ell(V)[U])_{i}) -\frac{1}{\alpha_{i}}g_{i}^{\top}v_{i} u_{i},
\end{equation}
which together with \eqref{eq:decom_hess_ell} implies that
\begin{align*}
     \Hess~\ell(V)[U] & =[ \Proj_{T_{v_{1}} \mc M_{1} }((\mathrm{D} Y(V)[U])_{1}),\dots, \Proj_{T_{v_{N}} \mc M_{N} }((\mathrm{D} Y(V)[U])_{N})]\\
     & = [ \Proj_{T_{v_{1}} \mc M_{1} }( (\nabla^{2}\ell(V)[U])_{1}) -\frac{1}{\alpha_{1}}g_{1}^{\top}v_{1} u_{1},\dots, \Proj_{T_{v_{N}} \mc M_{N} }( (\nabla^{2}\ell(V)[U])_{N}) -\frac{1}{\alpha_{N}}g_{N}^{\top}v_{N} u_{N}] \\
     & = [ \Proj_{T_{v_{1}} \mc M_{1} }( (\nabla^{2}\ell(V)[U])_{1}) ,\dots, \Proj_{T_{v_{N}} \mc M_{N} }( (\nabla^{2}\ell(V)[U])_{N})] + U\Lambda \\
     & = \Proj_{T_{V}\mc M}(\nabla^{2}\ell(V)[U] )+ U\Lambda,
\end{align*}
where $\Lambda \in \R^{N \times N}$ is a diagonal matrix with entries $\Lambda_{ii} =- \frac{1}{\alpha_{i}}g_{i}^{\top}v_{i}$.
\end{proof}

Now we are ready to prove Proposition~\ref{prop:L-smooth}.
\begin{proof}[Proof of Proposition~\ref{prop:L-smooth}]
For any $U \in T_{V}\mc M $ with $\|U\|_{F}=1$, Proposition~\ref{prop:formula_Hess} gives the Riemannian Hessian of $\ell(V)$ as 
\begin{equation}\label{eq:hess_decom}
        \Hess~\ell(V)[U] = \Proj_{T_{V}\mc M}(\nabla^{2} \ell(V)[U]) + U\Lambda,
    \end{equation}
where $\Lambda \in \R^{N \times N}$ is a diagonal matrix with entries $\Lambda_{ii} =- \frac{1}{\alpha_{i}}g_{i}^{\top}v_{i}$. Since $\Proj_{T_{V}\mc M}$ is a projection and together with Lemma~\ref{lem:bound_hessian} we have
\begin{equation}\label{eq:proj_V_ub}
    \| \Proj_{T_{V}\mc M}(\nabla^{2} \ell(V)[U])  \|_{F} \leq \| \nabla^{2} \ell(V)[U]\|_{F}\leq 2+4 \left(\|H \|_{F} + \sqrt{\bar{\alpha}}\right)^{2}.
\end{equation}
For the term $U\Lambda$ we have
\begin{align*}
    \|U\Lambda \|_{F}^{2} & =  \sum_{i=1}^{N} \|\frac{1}{\alpha_{i}}g_{i}^{\top}v_{i} u_{i}\|_{2}^{2}  \\
    & = \sum_{i=1}^{N}  \frac{1}{\alpha_{i}^{2}}(g_{i}^{\top}v_{i})^{2} \|u_{i}\|_{2}^{2}& \\
    & \leq \sum_{i=1}^{N}   \frac{1}{\alpha_{i}^{2}}\|g_{i}\|_{2}^{2}\|v_{i}\|_{2}^{2} \|u_{i}\|_{2}^{2}  \\
    & = \sum_{i=1}^{N}   \frac{1}{\alpha_{i}}\|g_{i}\|_{2}^{2} \|u_{i}\|_{2}^{2} & (\|v_{i}\|_{2}^{2}=\alpha_{i})\\
    & \leq \frac{1}{\alpha_{\min}} \|\nabla \ell(V) \|_{F}^{2} \sum_{i=1}^{N} \|u_{i}\|_{2}^{2} & (\|g_{i}\|_{2}^{2}\leq \|\nabla \ell(V) \|_{F}^{2}) \\
    & = \frac{1}{\alpha_{\min}} \|\nabla \ell(V) \|_{F}^{2}, &  (\sum_{i=1}^{N} \|u_{i}\|_{2}^{2} = \|U\|_{F}^{2}=1)
\end{align*} 
where $\alpha_{\min}=\min_{i}\alpha_{i}>0$. Now, we bound $\| \nabla \ell(V)\|_{F}$ as
\begin{align}
  \| \nabla \ell(V)\|_{F} = \|-2(H+V)M^{-1} \|_{F} \leq 2 \|H+V \|_{F} \|M^{-1}\|_{2}
     \leq 2( \| H\|_{F}+\sqrt{\bar{\alpha}} ), \label{eq:nablda_l_ub}
\end{align}
which gives that
\begin{equation}\label{eq:Ulambda_ub}
    \|U\Lambda \|_{F} \leq \frac{1}{\sqrt{\alpha_{\min}}} \| \nabla \ell(V)\|_{F} \leq \frac{2}{\sqrt{\alpha_{\min}}}( \| H\|_{F}+\sqrt{\bar{\alpha}} ).
\end{equation}
Equations~\eqref{eq:hess_decom}, \eqref{eq:proj_V_ub} and \eqref{eq:Ulambda_ub} together implies that for any $U \in T_{V} \mc M$ with $\| U\|_{F}=1$, we have
\begin{equation}\label{eq:Hess_ub}
    \|\Hess~\ell(V)[U] \|_{F} \leq \|\Proj_{T_{V}\mc M}(\nabla^{2}\ell(V)[U]) \|_{F} +\|U\Lambda \|_{F}  \leq 2+4 \left(\|H \|_{F} + \sqrt{\bar{\alpha}}\right)^{2} + \frac{2}{\sqrt{\alpha_{\min}}}( \| H\|_{F}+\sqrt{\bar{\alpha}} ).
\end{equation}
According to \citet[Corollary 10.47]{ref:boumal2023introduction}, Equation~\eqref{eq:Hess_ub} implies that $\ell(V)$ is $L$-smooth with the postulated constant. This completes the proof.
\end{proof}

\begin{proof}[Proof of Proposition~\ref{prop:block_L-smooth}]
The proof is similar to the proof of Proposition~\ref{prop:L-smooth}, and here we focus on the Riemannian Hessian of $\ell$ about the component $v_{i}$ instead of $V$. The Riemannian Hessian of $\ell(V)$ with respect to $v_{i}$ is computes as
\[
\Hess_{i}~\ell(V)[u_{i}] = \Proj_{T_{v_{i}} \mc M_{i} }(\nabla^{2}_{i}~\ell(V)[u_{i}]) - \frac{1}{\alpha_{i}}g_{i}^{\top}v_{i}u_{i},
\]
which implies that
\begin{align*}
    \|\Hess_{i}~\ell(V)[u_{i}] \|_{2} & \leq \|\Proj_{T_{v_{i}} \mc M_{i} }(\nabla^{2}_{i}~\ell(V)[u_{i}]) \|_{2} + \| \frac{1}{\alpha_{i}}g_{i}^{\top}v_{i}u_{i}\|_{2} 
     \leq \|\nabla^{2}_{i}~\ell(V)[u_{i}]\|_{2} + \| \frac{1}{\alpha_{i}}g_{i}^{\top}v_{i}u_{i}\|_{2}.
\end{align*}
Following the similar arguments in the proof of Lemma~\ref{lem:bound_hessian}, we have 
\begin{equation}
   \sup_{\|u_{i} \|_{2}=1} \|\nabla^{2}_{i}~\ell(V)[u_{i}]\|_{2}  \leq 2+4(\|H \|_{F}+\bar{\alpha})^2.
\end{equation}
Following the similar arguments in the proof of Proposition~\ref{prop:L-smooth}, for any $u_{i}\in T_{v_{i}}\mc M_{i}$ with $\|u_{i}\|_{2}=1$ we have
\begin{align*}
    \|\frac{1}{\alpha_{i}}g_{i}^{\top}v_{i}u_{i}\|_{2} & \leq \frac{1}{\alpha_{i}}\|g_{i} \|_{2} \| v_{i}\|_{2} \|u_{i} \|_{2} &\\
    & \leq  \frac{1}{\sqrt{\alpha_{i}}}\|g_{i} \|_{2} &\quad (\text{since }\|v_{i}\|=\sqrt{\alpha_{i}},~\|u_{i}\|_{2}=1) \\
    & \leq \frac{1}{\sqrt{\alpha_{i}}} \|\nabla\ell(V) \|_{F} &\\
    & \leq \frac{2}{\sqrt{\alpha_{i}}} (\|H \|_{F}+\sqrt{\bar{\alpha}}),&
\end{align*}
where the last inequality comes from~\eqref{eq:nablda_l_ub}. Thus, for any $u_{i}\in T_{v_{i}}\mc M_{i}$ with $\|u_{i}\|_2=1$ it holds that
\begin{align*}
    \|\Hess_{i}~\ell(V)[u_{i}]\|_{2} & \leq \|\nabla^{2}_{i}~\ell(V)[u_{i}]\|_{2} + \| \frac{1}{\alpha_{i}}g_{i}^{\top}v_{i}u_{i}\|_{2} \\
    & \leq 2+4(\|H \|_{F}+\sqrt{\bar{\alpha}})^2 + \frac{2}{\sqrt{\alpha_{i}}} (\|H \|_{F}+\sqrt{\bar{\alpha}})=L_{i}.
\end{align*}
Combine the above bound with~\citet[Proposition 10.47]{ref:boumal2023introduction} completes the proof.
\end{proof}

We are now ready to prove Theorem~\ref{thm:convergence}.
\begin{proof}[Proof of Theorem~\ref{thm:convergence}]
\citet[Example~3]{ref:gutman2023coordinate} shows that the selection rule of block in Algorithm~\ref{alg:RBCD with Exponential Maps} satisfies the $(1,\infty)$-norm condition. Proposition~\ref{prop:L-smooth} and Proposition~\ref{prop:block_L-smooth} show that our objective function $\ell$ is both $L$-smooth and block-wise smooth. Equation~\eqref{eq:block_L-smooth2} further demonstrates that the iterates of Algorithm~\ref{alg:algorithm1} satisfies~\citet[Assumption 1]{ref:gutman2023coordinate}. Then the convergence of Algorithm~\ref{alg:RBCD with Exponential Maps} is a direct consequence of~\citet[Theorem~3]{ref:gutman2023coordinate}.
\end{proof}

\section{Experimental Details}\label{app:Experimental Details}
\subsection{Experiment Settings}

Here, we rigorously introduce where the activation steering is applied in transformer-based LMs. Transformer architectures~\citep{ref:vaswani2017attention} have become the foundation of modern LMs, achieving remarkable performance across diverse natural language processing tasks. We consider an LM with $M$ heads per layer, exhibiting the following per-block information flow for the layer $l$:
\begin{equation*}
x^{(l+1)} = \mathrm{FFN}(\mathrm{MHA}(x^{(l)})) 
= \mathrm{FFN}\Big( 
\bigoplus_{j=1}^{M} W_j^{o} \big( \mathrm{Attn}_j(x^{(l)}) \big) 
\Big),
\end{equation*} 
where $x^{(l)}$ is the input of the specific layer $l$, with $W_{j}^{o}$ denoting the output projection matrix and $\mathrm{Attn}_j$ denoting the single-head attention transformation for each head $j=1,\dots,M$. Here, $\mathrm{FFN}$, $\mathrm{MHA}$ denote the feed-forward and multi-head attention, respectively. The direct sum operator $\bigoplus$ concatenates the outputs from all attention heads before applying the feed-forward transformation. Contemporary steering methodologies operate by introducing additive perturbations to the residual stream activations, as demonstrated in prior works~\cite{turner2024steering, ackerman2024representation}.
\begin{equation}
\tilde{x}^{(l+1)} = x^{(l+1)} + v^{(l+1)},
\label{eq:activation_steering}
\end{equation}
\noindent where $v^{(l+1)}$ is the steering vector. The computation of these steering vectors follows the procedure outlined in Algorithm~\ref{alg:algorithm1}.

\subsection{Detail Evaluation for Language-Model-as-a-Judge}

We provide below the prompts to evaluate the Diversity Score and the Unique Solution Count using OpenAI GPT4.1-mini.

\textbf{Diversity Score prompt}
\vspace{0.2cm}
\lstset{
  basicstyle=\ttfamily\small,
  breaklines=true,
  frame=single
}

\begin{lstlisting}
You are given a math reasoning problem and a list of different responses (solutions) generated by a model.
Your task is to assign a float score from 0 to 1.0 that reflects the **diversity of reasoning and approaches** among the responses.
Consider differences in:
- Mathematical strategies
- Solution steps
- Structural approach
- Logical of reasoning

### Problem:
{problem}

### Responses:
{responses}

### Instruction:
Output **only** a float number between 0 and 1.0 (inclusive), rounded to two decimal places.  
Do **not** include any explanation, symbols, or text - only the score.

Example output: `0.75'
\end{lstlisting}

\vspace{0.5cm}
\noindent\textbf{Unique Solution Count prompt}
\vspace{0.2cm}
\lstset{
  basicstyle=\ttfamily\small,
  breaklines=true,
  frame=single
}

\begin{lstlisting}
You are given a math reasoning problem and a list of responses generated by a model.

Your task is to count how many **unique** responses there are.

Two responses are considered different if they use different:
- Mathematical strategies
- Solution steps
- Logical approach
- Structure of reasoning

### Problem:
{problem}

### Responses:
{responses}

### Instruction:
Output **only** the number of unique responses as an integer.  
Do **not** include any explanation, text, or symbols - just the number.

Example output: `3'
\end{lstlisting}
\vspace{0.5cm}
\noindent\textbf{Temperature setting for evaluation.} All evaluations using the language-model-as-a-judge were performed with \textbf{GPT-4.1 mini} at \texttt{temperature = 0.0}, ensuring deterministic and consistent scoring across all prompts.

\subsection{Detailed Results with Variance: Unique Solution Count and Diversity Score }
Table~\ref{tab:results-uniquesolution-variance} reports the mean and standard deviation of Unique Solution Count and Diversity Score across benchmarks and sampling temperatures. The results demonstrate that our framework can generate more unique solutions and yield superior diversity scores across most of the experimental conditions. 
\begin{table}[!]
\begin{tabular}{c|c|c|ccc|ccc|ccc}
\hline
\multirow{3}{*}{\rotatebox{90}{Metric}}                                            & \multirow{3}{*}{Model}                                                                       & \multirow{3}{*}{T} & \multicolumn{3}{c|}{AIME24}                                                                                                                                                                                & \multicolumn{3}{c|}{MATH500}                                                                                                                                                                                       & \multicolumn{3}{c}{OlympiadBench}                                                                                                                                                                                  \\ \cline{4-12} 
&                                                                                              &                        & \multicolumn{2}{c}{SPREAD}                                                                                                                  & \multirow{2}{*}{Sampling}                                    & \multicolumn{2}{c}{SPREAD}                                                                                                                  & \multirow{2}{*}{Sampling}                                            & \multicolumn{2}{c}{SPREAD}                                                                                                                  & \multirow{2}{*}{Sampling}                                            \\ \cline{4-5} \cline{7-8} \cline{10-11}
&                                                                                              &                        & $C = 1$                                                              & $C = 10$                                                             &                                                              & $C = 1$                                                              & $C = 10$                                                             &                                                                      & $C = 1$                                                              & $C = 10$                                                             &                                                                      \\ \hline
\multirow{10}{*}{\rotatebox{90}{Unique Solution Count $\uparrow$}} & \multirow{5}{*}{\begin{tabular}[c]{@{}c@{}}\rotatebox{90}{Qwen2.5-1.5B}\end{tabular}}                     & 1.0                    & \textbf{\begin{tabular}[c]{@{}c@{}}6.97\\ ($\pm$ 0.91)\end{tabular}} & \begin{tabular}[c]{@{}c@{}}6.60\\ ($\pm$ 1.02)\end{tabular}          & \begin{tabular}[c]{@{}c@{}}6.67\\ ($\pm$ 0.99)\end{tabular}  & \begin{tabular}[c]{@{}c@{}}3.14\\ ($\pm$ 0.74)\end{tabular}          & \textbf{\begin{tabular}[c]{@{}c@{}}3.21\\ ($\pm$ 0.68)\end{tabular}} & \begin{tabular}[c]{@{}c@{}}3.14\\ ($\pm$ 0.69)\end{tabular}          & \begin{tabular}[c]{@{}c@{}}6.12\\ ($\pm$ 1.24)\end{tabular}          & \begin{tabular}[c]{@{}c@{}}6.14\\ ($\pm$ 1.23)\end{tabular}          & \begin{tabular}[c]{@{}c@{}}6.14\\ ($\pm$ 1.61)\end{tabular}          \\
&                                                                                              & 0.8                    & \textbf{\begin{tabular}[c]{@{}c@{}}6.83\\ ($\pm$ 1.00)\end{tabular}} & \begin{tabular}[c]{@{}c@{}}6.43\\ ($\pm$ 1.20)\end{tabular}          & \begin{tabular}[c]{@{}c@{}}3.60\\ ($\pm$ 0.39)\end{tabular}  & \begin{tabular}[c]{@{}c@{}}3.03\\ ($\pm$ 0.72)\end{tabular}          & \textbf{\begin{tabular}[c]{@{}c@{}}3.05\\ ($\pm$ 0.70)\end{tabular}} & \begin{tabular}[c]{@{}c@{}}2.97\\ ($\pm$ 0.75)\end{tabular}          & \begin{tabular}[c]{@{}c@{}}5.99\\ ($\pm$ 1.22)\end{tabular}          & \begin{tabular}[c]{@{}c@{}}5.99\\ ($\pm$ 1.36)\end{tabular}          & \textbf{\begin{tabular}[c]{@{}c@{}}6.04\\ ($\pm$ 1.53)\end{tabular}} \\
&                                                                                              & 0.6                    & \textbf{\begin{tabular}[c]{@{}c@{}}6.40\\ ($\pm$ 1.11)\end{tabular}} & \begin{tabular}[c]{@{}c@{}}6.07\\ ($\pm$ 1.15)\end{tabular}          & \begin{tabular}[c]{@{}c@{}}3.37 \\ ($\pm$ 0.37)\end{tabular} & \textbf{\begin{tabular}[c]{@{}c@{}}2.91\\ ($\pm$ 0.74)\end{tabular}} & \begin{tabular}[c]{@{}c@{}}2.86\\ ($\pm$ 0.78)\end{tabular}          & \begin{tabular}[c]{@{}c@{}}2.90\\ ($\pm$ 0.77)\end{tabular}          & \begin{tabular}[c]{@{}c@{}}5.57\\ ($\pm$ 1.33)\end{tabular}          & \begin{tabular}[c]{@{}c@{}}5.65\\ ($\pm$ 1.26)\end{tabular}          & \textbf{\begin{tabular}[c]{@{}c@{}}5.58\\ ($\pm$ 1.67)\end{tabular}} \\
&                                                                                              & 0.4                    & \begin{tabular}[c]{@{}c@{}}6.00\\ ($\pm$ 1.10)\end{tabular}          & \textbf{\begin{tabular}[c]{@{}c@{}}6.20\\ ($\pm$ 1.05)\end{tabular}} & \begin{tabular}[c]{@{}c@{}}6.17\\ ($\pm$ 0.90)\end{tabular}  & \begin{tabular}[c]{@{}c@{}}2.73\\ ($\pm$ 0.81)\end{tabular}          & \begin{tabular}[c]{@{}c@{}}2.74\\ ($\pm$ 0.80)\end{tabular}          & \textbf{\begin{tabular}[c]{@{}c@{}}2.77\\ ($\pm$ 0.79)\end{tabular}} & \textbf{\begin{tabular}[c]{@{}c@{}}5.30\\ ($\pm$ 1.36)\end{tabular}} & \begin{tabular}[c]{@{}c@{}}5.30\\ ($\pm$ 1.40)\end{tabular}          & \begin{tabular}[c]{@{}c@{}}5.26\\ ($\pm$ 1.99)\end{tabular}          \\
&                                                                                              & 0.2                    & \begin{tabular}[c]{@{}c@{}}5.17\\ ($\pm$ 1.19)\end{tabular}          & \textbf{\begin{tabular}[c]{@{}c@{}}5.60\\ ($\pm$ 1.11)\end{tabular}} & \begin{tabular}[c]{@{}c@{}}2.97\\ ($\pm$ 0.65)\end{tabular}  & \begin{tabular}[c]{@{}c@{}}2.47\\ ($\pm$ 0.78)\end{tabular}          & \begin{tabular}[c]{@{}c@{}}2.56\\ ($\pm$ 0.82)\end{tabular}          & \textbf{\begin{tabular}[c]{@{}c@{}}2.59\\ ($\pm$ 0.83)\end{tabular}} & \begin{tabular}[c]{@{}c@{}}4.65\\ ($\pm$ 1.48)\end{tabular}          & \begin{tabular}[c]{@{}c@{}}4.69\\ ($\pm$ 1.47)\end{tabular}          & \begin{tabular}[c]{@{}c@{}}4.69\\ ($\pm$ 2.21)\end{tabular}          \\ \cline{2-12} 
& \multirow{5}{*}{\begin{tabular}[c]{@{}c@{}}\rotatebox{90}{\shortstack{Qwen2.5-Math\\1.5B- Instruct}}\end{tabular}} & 1.0                    & \begin{tabular}[c]{@{}c@{}}6.63\\ ($\pm$ 0.86)\end{tabular}          & \textbf{\begin{tabular}[c]{@{}c@{}}7.03\\ ($\pm$ 1.07)\end{tabular}} & \begin{tabular}[c]{@{}c@{}}3.67\\ ($\pm$ 0.60)\end{tabular}  & \begin{tabular}[c]{@{}c@{}}1.92\\ ($\pm$ 1.07)\end{tabular}          & \textbf{\begin{tabular}[c]{@{}c@{}}1.94\\ ($\pm$ 1.10)\end{tabular}} & \begin{tabular}[c]{@{}c@{}}1.93\\ ($\pm$ 1.06)\end{tabular}          & \textbf{\begin{tabular}[c]{@{}c@{}}4.59\\ ($\pm$ 2.34)\end{tabular}} & \begin{tabular}[c]{@{}c@{}}4.53\\ ($\pm$ 2.35)\end{tabular}          & \begin{tabular}[c]{@{}c@{}}4.57\\ ($\pm$ 2.34)\end{tabular}          \\
&                                                                                              & 0.8                    & \begin{tabular}[c]{@{}c@{}}6.63\\ ($\pm$ 1.04)\end{tabular}          & \textbf{\begin{tabular}[c]{@{}c@{}}6.73\\ ($\pm$ 1.50)\end{tabular}} & \begin{tabular}[c]{@{}c@{}}3.47\\ ($\pm$ 0.67)\end{tabular}  & \begin{tabular}[c]{@{}c@{}}1.87\\ ($\pm$ 1.05)\end{tabular}          & \textbf{\begin{tabular}[c]{@{}c@{}}1.90\\ ($\pm$ 1.07)\end{tabular}} & \begin{tabular}[c]{@{}c@{}}1.89\\ ($\pm$ 1.10)\end{tabular}          & \begin{tabular}[c]{@{}c@{}}4.28\\ ($\pm$ 2.31)\end{tabular}          & \begin{tabular}[c]{@{}c@{}}4.36\\ ($\pm$ 2.32)\end{tabular}          & \textbf{\begin{tabular}[c]{@{}c@{}}4.39\\ ($\pm$ 2.31)\end{tabular}} \\
&                                                                                              & 0.6                    & \begin{tabular}[c]{@{}c@{}}6.27\\ ($\pm$ 1.28)\end{tabular}          & \textbf{\begin{tabular}[c]{@{}c@{}}6.47\\ ($\pm$ 1.37)\end{tabular}} & \begin{tabular}[c]{@{}c@{}}3.50\\ ($\pm$ 0.62)\end{tabular}  & \begin{tabular}[c]{@{}c@{}}1.81\\ ($\pm$ 1.01)\end{tabular}          & \textbf{\begin{tabular}[c]{@{}c@{}}1.84\\ ($\pm$ 1.02)\end{tabular}} & \begin{tabular}[c]{@{}c@{}}1.82\\ ($\pm$ 1.00)\end{tabular}          & \begin{tabular}[c]{@{}c@{}}4.10\\ ($\pm$ 2.24)\end{tabular}          & \textbf{\begin{tabular}[c]{@{}c@{}}4.16\\ ($\pm$ 2.26)\end{tabular}} & \begin{tabular}[c]{@{}c@{}}4.11\\ ($\pm$ 2.24)\end{tabular}          \\
&                                                                                              & 0.4                    & \textbf{\begin{tabular}[c]{@{}c@{}}6.23\\ ($\pm$ 1.24)\end{tabular}} & \begin{tabular}[c]{@{}c@{}}5.87\\ ($\pm$ 1.13)\end{tabular}          & \begin{tabular}[c]{@{}c@{}}6.17\\ ($\pm$ 1.57)\end{tabular}  & \begin{tabular}[c]{@{}c@{}}1.77\\ ($\pm$ 0.95)\end{tabular}          & \begin{tabular}[c]{@{}c@{}}1.73\\ ($\pm$ 0.93)\end{tabular}          & \textbf{\begin{tabular}[c]{@{}c@{}}1.78\\ ($\pm$ 0.98)\end{tabular}} & \textbf{\begin{tabular}[c]{@{}c@{}}3.99\\ ($\pm$ 2.19)\end{tabular}} & \begin{tabular}[c]{@{}c@{}}3.92\\ ($\pm$ 2.19)\end{tabular}          & \begin{tabular}[c]{@{}c@{}}3.89\\ ($\pm$ 2.14)\end{tabular}          \\
&                                                                                              & 0.2                    & \textbf{\begin{tabular}[c]{@{}c@{}}5.87\\ ($\pm$ 1.45)\end{tabular}} & \textbf{\begin{tabular}[c]{@{}c@{}}5.87\\ ($\pm$ 1.28)\end{tabular}} & \begin{tabular}[c]{@{}c@{}}5.53\\ ($\pm$ 1.78)\end{tabular}  & \begin{tabular}[c]{@{}c@{}}1.70\\ ($\pm$ 0.89)\end{tabular}          & \begin{tabular}[c]{@{}c@{}}1.74\\ ($\pm$ 0.92)\end{tabular}          & \begin{tabular}[c]{@{}c@{}}1.74\\ ($\pm$ 0.93)\end{tabular}          & \begin{tabular}[c]{@{}c@{}}3.67\\ ($\pm$ 2.04)\end{tabular}          & \begin{tabular}[c]{@{}c@{}}3.65\\ ($\pm$ 2.04)\end{tabular}          & \textbf{\begin{tabular}[c]{@{}c@{}}3.68\\ ($\pm$ 2.06)\end{tabular}} \\ \hline
\multirow{10}{*}{\rotatebox{90}{Diversity Score $\uparrow$}}       & \multirow{5}{*}{\begin{tabular}[c]{@{}c@{}}\rotatebox{90}{Qwen2.5-1.5B}\end{tabular}}                     & 1.0                    & \textbf{\begin{tabular}[c]{@{}c@{}}0.37\\ ($\pm$ 0.08)\end{tabular}} & \begin{tabular}[c]{@{}c@{}}0.33\\ ($\pm$ 0.06)\end{tabular}          & \begin{tabular}[c]{@{}c@{}}0.40\\ ($\pm$ 0.09)\end{tabular}  & \textbf{\begin{tabular}[c]{@{}c@{}}0.40\\ ($\pm$ 0.05)\end{tabular}} & \begin{tabular}[c]{@{}c@{}}0.39\\ ($\pm$ 0.05)\end{tabular}          & \begin{tabular}[c]{@{}c@{}}0.39\\ ($\pm$ 0.05)\end{tabular}          & \begin{tabular}[c]{@{}c@{}}0.38\\ ($\pm$ 0.06)\end{tabular}          & \textbf{\begin{tabular}[c]{@{}c@{}}0.40\\ ($\pm$ 0.06)\end{tabular}} & \begin{tabular}[c]{@{}c@{}}0.39\\ ($\pm$ 0.06)\end{tabular}          \\
&                                                                                              & 0.8                    & \textbf{\begin{tabular}[c]{@{}c@{}}0.51\\ ($\pm$ 0.09)\end{tabular}} & \begin{tabular}[c]{@{}c@{}}0.45\\ ($\pm$ 0.08)\end{tabular}           & \begin{tabular}[c]{@{}c@{}}0.37\\ ($\pm$ 0.05)\end{tabular}  & \begin{tabular}[c]{@{}c@{}}0.42\\ ($\pm$ 0.05)\end{tabular}          & \textbf{\begin{tabular}[c]{@{}c@{}}0.44\\ ($\pm$ 0.06)\end{tabular}} & \begin{tabular}[c]{@{}c@{}}0.41\\ ($\pm$ 0.05)\end{tabular}          & \begin{tabular}[c]{@{}c@{}}0.46\\ ($\pm$ 0.06)\end{tabular}          & \begin{tabular}[c]{@{}c@{}}0.46\\ ($\pm$ 0.06)\end{tabular}          & \begin{tabular}[c]{@{}c@{}}0.46\\ ($\pm$ 0.06)\end{tabular}          \\
&                                                                                              & 0.6                    & \textbf{\begin{tabular}[c]{@{}c@{}}0.52\\ ($\pm$ 0.08)\end{tabular}} & \begin{tabular}[c]{@{}c@{}}0.46\\ ($\pm$ 0.06)\end{tabular}           & \begin{tabular}[c]{@{}c@{}}0.39\\ ($\pm$ 0.07)\end{tabular}  & \textbf{\begin{tabular}[c]{@{}c@{}}0.41\\ ($\pm$ 0.06)\end{tabular}} & \begin{tabular}[c]{@{}c@{}}0.40\\ ($\pm$ 0.06)\end{tabular}          & \begin{tabular}[c]{@{}c@{}}0.40\\ ($\pm$ 0.06)\end{tabular}          & \begin{tabular}[c]{@{}c@{}}0.46\\ ($\pm$ 0.06)\end{tabular}          & \begin{tabular}[c]{@{}c@{}}0.46\\ ($\pm$ 0.06)\end{tabular}          & \begin{tabular}[c]{@{}c@{}}0.46\\ ($\pm$ 0.06)\end{tabular}          \\
&                                                                                              & 0.4                    & \begin{tabular}[c]{@{}c@{}}0.41\\ ($\pm$ 0.05)\end{tabular}          & \textbf{\begin{tabular}[c]{@{}c@{}}0.47\\ ($\pm$ 0.08)\end{tabular}} & \begin{tabular}[c]{@{}c@{}}0.42\\ ($\pm$ 0.05)\end{tabular}  & \begin{tabular}[c]{@{}c@{}}0.36\\ ($\pm$ 0.05)\end{tabular}          & \begin{tabular}[c]{@{}c@{}}0.36\\ ($\pm$ 0.06)\end{tabular}          & \begin{tabular}[c]{@{}c@{}}0.36\\ ($\pm$ 0.06)\end{tabular}          & \begin{tabular}[c]{@{}c@{}}0.42\\ ($\pm$ 0.05)\end{tabular}          & \textbf{\begin{tabular}[c]{@{}c@{}}0.43\\ ($\pm$ 0.05)\end{tabular}} & \begin{tabular}[c]{@{}c@{}}0.42\\ ($\pm$ 0.05)\end{tabular}          \\
&                                                                                              & 0.2                    & \begin{tabular}[c]{@{}c@{}}0.31\\ ($\pm$ 0.02)\end{tabular}          & \textbf{\begin{tabular}[c]{@{}c@{}}0.34\\ ($\pm$ 0.02)\end{tabular}} & \begin{tabular}[c]{@{}c@{}}0.33\\ ($\pm$ 0.05)\end{tabular}  & \textbf{\begin{tabular}[c]{@{}c@{}}0.29\\ ($\pm$ 0.05)\end{tabular}} & \textbf{\begin{tabular}[c]{@{}c@{}}0.29\\ ($\pm$ 0.05)\end{tabular}} & \begin{tabular}[c]{@{}c@{}}0.28\\ ($\pm$ 0.05)\end{tabular}          & \begin{tabular}[c]{@{}c@{}}0.33\\ ($\pm$ 0.05)\end{tabular}          & \textbf{\begin{tabular}[c]{@{}c@{}}0.35\\ ($\pm$ 0.05)\end{tabular}} & \textbf{\begin{tabular}[c]{@{}c@{}}0.35\\ ($\pm$ 0.05)\end{tabular}} \\ \cline{2-12} 
& \multirow{5}{*}{\begin{tabular}[c]{@{}c@{}}\rotatebox{90}{\shortstack{Qwen2.5-Math\\1.5B- Instruct}}\end{tabular}} & 1.0                    & \begin{tabular}[c]{@{}c@{}}0.64\\ ($\pm$ 0.04)\end{tabular}          & \textbf{\begin{tabular}[c]{@{}c@{}}0.65\\ ($\pm$ 0.05)\end{tabular}} & \begin{tabular}[c]{@{}c@{}}0.63\\ ($\pm$ 0.05)\end{tabular}  & \begin{tabular}[c]{@{}c@{}}0.25\\ ($\pm$ 0.06)\end{tabular}          & \begin{tabular}[c]{@{}c@{}}0.25\\ ($\pm$ 0.06)\end{tabular}          & \begin{tabular}[c]{@{}c@{}}0.25\\ ($\pm$ 0.06)\end{tabular}          & \textbf{\begin{tabular}[c]{@{}c@{}}0.39\\ ($\pm$ 0.08)\end{tabular}} & \textbf{\begin{tabular}[c]{@{}c@{}}0.39\\ ($\pm$ 0.08)\end{tabular}} & \begin{tabular}[c]{@{}c@{}}0.38\\ ($\pm$ 0.08)\end{tabular}          \\
&                                                                                              & 0.8                    & \begin{tabular}[c]{@{}c@{}}0.60\\ ($\pm$ 0.06)\end{tabular}          & \textbf{\begin{tabular}[c]{@{}c@{}}0.62\\ ($\pm$ 0.06)\end{tabular}} & \begin{tabular}[c]{@{}c@{}}0.57\\ ($\pm$ 0.05)\end{tabular}  & \begin{tabular}[c]{@{}c@{}}0.23\\ ($\pm$ 0.05)\end{tabular}          & \begin{tabular}[c]{@{}c@{}}0.23\\ ($\pm$ 0.05)\end{tabular}          & \begin{tabular}[c]{@{}c@{}}0.23\\ ($\pm$ 0.06)\end{tabular}          & \begin{tabular}[c]{@{}c@{}}0.36\\ ($\pm$ 0.08)\end{tabular}          & \textbf{\begin{tabular}[c]{@{}c@{}}0.38\\ ($\pm$ 0.08)\end{tabular}} & \begin{tabular}[c]{@{}c@{}}0.37\\ ($\pm$ 0.08)\end{tabular}          \\
&                                                                                              & 0.6                    & \textbf{\begin{tabular}[c]{@{}c@{}}0.57\\ ($\pm$ 0.05)\end{tabular}} & \begin{tabular}[c]{@{}c@{}}0.55\\ ($\pm$ 0.06)\end{tabular}          & \begin{tabular}[c]{@{}c@{}}0.55\\ ($\pm$ 0.05)\end{tabular}  & \begin{tabular}[c]{@{}c@{}}0.21\\ ($\pm$ 0.05)\end{tabular}          & \textbf{\begin{tabular}[c]{@{}c@{}}0.22\\ ($\pm$ 0.05)\end{tabular}} & \begin{tabular}[c]{@{}c@{}}0.20\\ ($\pm$ 0.05)\end{tabular}          & \begin{tabular}[c]{@{}c@{}}0.34\\ ($\pm$ 0.07)\end{tabular}          & \begin{tabular}[c]{@{}c@{}}0.34\\ ($\pm$ 0.08)\end{tabular}          & \begin{tabular}[c]{@{}c@{}}0.34\\ ($\pm$ 0.07)\end{tabular}          \\
&                                                                                              & 0.4                    & \begin{tabular}[c]{@{}c@{}}0.51\\ ($\pm$ 0.05)\end{tabular}          & \begin{tabular}[c]{@{}c@{}}0.51\\ ($\pm$ 0.07)\end{tabular}          & \begin{tabular}[c]{@{}c@{}}0.51\\ ($\pm$ 0.06)\end{tabular}  & \textbf{\begin{tabular}[c]{@{}c@{}}0.19\\ ($\pm$ 0.04)\end{tabular}} & \begin{tabular}[c]{@{}c@{}}0.18\\ ($\pm$ 0.04)\end{tabular}          & \begin{tabular}[c]{@{}c@{}}0.18\\ ($\pm$ 0.04)\end{tabular}          & \textbf{\begin{tabular}[c]{@{}c@{}}0.31\\ ($\pm$ 0.07)\end{tabular}} & \begin{tabular}[c]{@{}c@{}}0.30\\ ($\pm$ 0.07)\end{tabular}          & \begin{tabular}[c]{@{}c@{}}0.31\\ ($\pm$ 0.07)\end{tabular}          \\
&                                                                                              & 0.2                    & \begin{tabular}[c]{@{}c@{}}0.46\\ ($\pm$ 0.06)\end{tabular}          & \textbf{\begin{tabular}[c]{@{}c@{}}0.52\\ ($\pm$ 0.08)\end{tabular}} & \begin{tabular}[c]{@{}c@{}}0.41 \\ ($\pm$ 0.07)\end{tabular} & \begin{tabular}[c]{@{}c@{}}0.15\\ ($\pm$ 0.04)\end{tabular}          & \textbf{\begin{tabular}[c]{@{}c@{}}0.16\\ ($\pm$ 0.04)\end{tabular}} & \begin{tabular}[c]{@{}c@{}}0.15\\ ($\pm$ 0.04)\end{tabular}          & \begin{tabular}[c]{@{}c@{}}0.26\\ ($\pm$ 0.05)\end{tabular}          & \begin{tabular}[c]{@{}c@{}}0.26\\ ($\pm$ 0.06)\end{tabular}          & \begin{tabular}[c]{@{}c@{}}0.26\\ ($\pm$ 0.05)\end{tabular}         

\end{tabular}
\caption{Mean and variance value of Unique Solution Count $\uparrow$ and Diversity Score $\uparrow$ comparison across model variants and sampling temperature (T) on three mathematical reasoning benchmarks, including AIME24 \cite{ref:aimo2024aime}, MATH500 \cite{ref:lightman2023let} and OlympiadBench \cite{ref:he2024olympiadbench}. Bold indicates the best method in each dataset.}
\label{tab:results-uniquesolution-variance}
\end{table}
\newpage
\subsection{Pass@$N$ Across Steering Layers}
We set the random seed to 42 for all experiments. We conduct experiments to study the effects of changing the steering layers on the performance of SPREAD. For simplicity, we use the AIME24 dataset because it has only 30 samples. We screen the steering layers from 1 to 28 using Qwen2.5-Math-1.5B-Instruct. Figure~\ref{fig:result-layer} gives the results measured by Pass@8.
\begin{figure*}[!ht]
    \centering
    \includegraphics[width=1.0\textwidth]{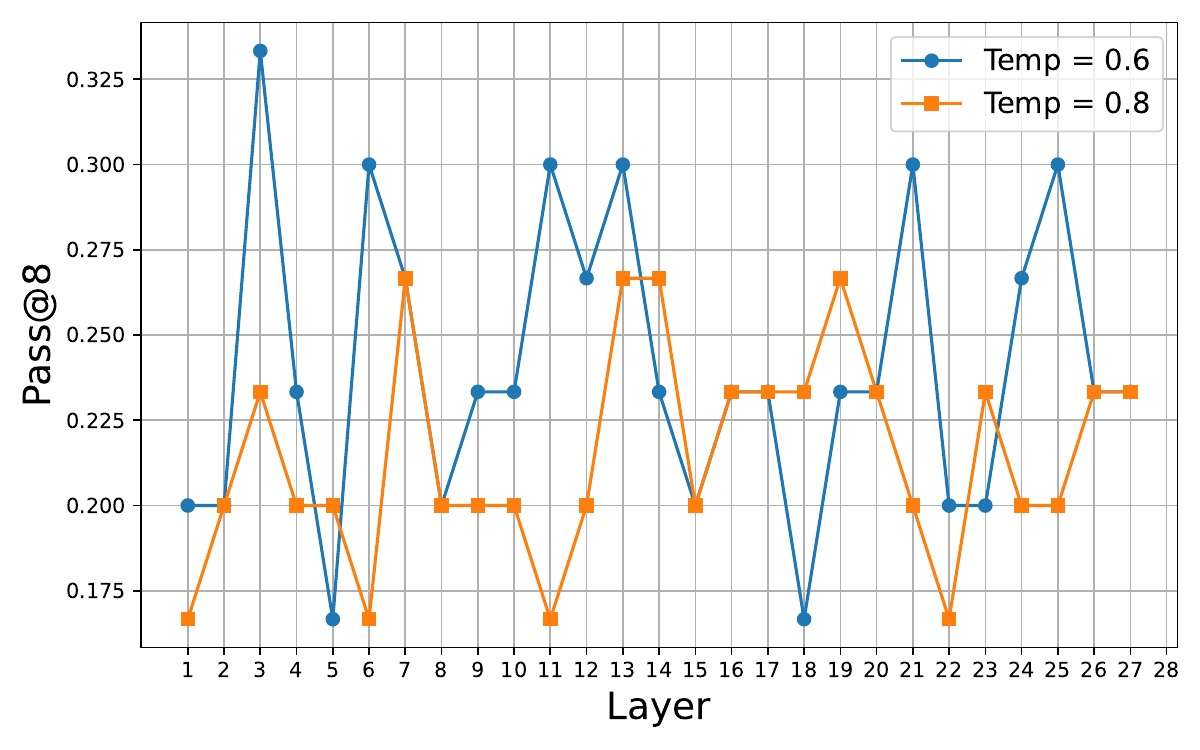}
    \caption{Layer-wise comparison of Pass@$N$ $\uparrow$ (in $\%$) for Qwen2.5-Math-1.5B-Instruct across 28 layers and different temperatures on the AIME24 benchmark.}
    \label{fig:result-layer}
    \vspace{-3mm}
\end{figure*}

\newpage
\subsection{Hypothesis Testing}
 In Section~\ref{sec:SPREAD}, we postulate that a larger volume of the parallelepiped spanned by activations $h_{im}$ will induce diversity in the final answers.  We now propose a statistical test to validate this hypothesis.
 
  For each question $q_{i},i=1,\dots,M$, we simultaneously generate $N=16$ output sequences.
  At each decoding step $\tau$, we extract the final hidden state vectors of those $N$ paths and form the matrix $H^{\tau}_{i}=[h^{\tau}_{i1},\dots,h^{\tau}_{iN}]$ and use Algorithm~\ref{alg:algorithm1} to compute steer vectors $V^{\tau}_{i}=[v^{\tau}_{i1},\dots,v^{\tau}_{iN}]$. For each question $q_i$, we we draw $10$ subsamples without replacement of size $\tilde{N}=8 $ from $N$ responses, yielding $10$ sub-sampled paths per question. Let $k\in\{1,\dots,10\}$ index these.

For each sub-sampled path $k$ of question $q_{i}$, define the observed diversity
    \[
    U_{ik} = \#~\text{the unique solutions among $\tilde{N}$ responses},
    \]
and the cumulative “volume” proxy
    \[
    Z_{ik} = \sum_{\tau=1}^{\tau_{final}}\log\det[I+(H^{\tau}_{ik}+V^{\tau}_{ik})^{\top}(H^{\tau}_{ik}+V^{\tau}_{ik})],
    \]
    where $ H^{\tau}_{ik}$ and $V^{\tau}_{ik}$ are activations and steering vectors corresponding to sub-sampled path $k$ of question $q_{i}$.
    
Since $U_{ik}$ is a count between $1$ and $\tilde{N}$, we model $U_{ik}$ as a Binomial outcome with $\tilde{N}$ trials and success probability $p_{ik}$ so that
    \[
    U_{ik} \sim \mathrm{Binomial}(\tilde{N},p_{ik}),
    \]
    where $p_{ik}$ links with $Z_{ik}$ with a logistic  regression:
    \[
    p_{ik}=\frac{\exp(\beta_{0,i}+\beta Z_{ik})}{1+\exp(\beta_{0,i}+\beta Z_{ik})} = \frac{1}{1+\exp(-\beta_{0,i}-\beta Z_{ik})},
    \]
    where $\beta_{0,i} \in \mathbb{R}$ are question-specific intercepts and $\beta \in \mathbb{R}$ is the common slope parameter.
We are interested in testing whether the predictor $Z$ is positively associated with the outcome $U$:
\[
\mathcal H_0: \beta \le 0
\qquad \text{versus} \qquad
\mathcal H_A: \beta > 0.
\]
We estimate $\{\beta_{0,i}\}_{i=1}^{M}$ and $\beta$ by maximizing the binomial log-likelihood:
\[
\max_{\{\beta_{0,i}\}_{i=1}^{M},\beta } \sum_{i=1}^M \sum_{k=1}^{10}   U_{ik}(\beta_{0,i} + \beta Z_{ik}) - \tilde{N} \log\!\big(1 + e^{\beta_{0,i} + \beta Z_{ik}}\big) .
\]

Inference is based on a Wald statistic with a cluster-robust (sandwich) variance estimator, clustering at the question level to account for within-question dependence and heteroskedasticity:
\[
z_{\mathrm{Wald}} = \frac{\hat\beta}{\widehat{\mathrm{SE}}(\hat\beta)}.
\]
Applying this procedure yields $\hat\beta=0.88$, clustered $\widehat{\mathrm{SE}}(\hat\beta)=0.24$, so $z_{\mathrm{Wald}}=3.6$ and a one-sided $p=0.001$. Thus, we reject $H_{0}$ at conventional levels. Substantively, a one-unit increase in $Z$ multiplies the odds of a response being unique by $\exp(\widehat \beta)\approx2.4$, indicating a strong positive association between the proposed volume measure and answer diversity.
\end{document}